\newcommand{\bigO}[1]{\mathcal{O}\left(#1\right)}
\newcommand{\model}{\textsc{GameOpt}\xspace}
\newcommand{\modelNew}{\textsc{GameOpt+}\xspace}
\newcommand{\auctionname}{\textsc{Game}\xspace}
\newcommand{\optname}{\textsc{Opt}\xspace}
\newcommand{\vts}[1]{\lvert #1 \rvert}
\newcommand\footnoteref[1]{\protected@xdef\@thefnmark{\ref{#1}}\@footnotemark}
\newcommand{\shorteq}{%
  \settowidth{\@tempdima}{-}
  \resizebox{\@tempdima}{\height}{=}%
}
\definecolor{sg}{HTML}{00ff7f}
\definecolor{lb}{HTML}{b0f5ef}
\definecolor{lg}{HTML}{9bfaa8}
\theoremstyle{plain}
\newtheorem{theorem}{Theorem}[section]
\begin{document}

\title{\modelNew: Improving Fuel Efficiency in Unregulated Heterogeneous Traffic Intersections via Optimal Multi-agent Cooperative Control}



\author{Nilesh Suriyarachchi$^{1}$, Rohan Chandra$^{2}$, Arya Anantula$^{2}$, John S. Baras$^{3}$ and Dinesh Manocha$^{3,4}$
\thanks{$^{1}$Nokia Bell Labs, USA, $^{2}$Computer Science Department, The University of Texas at Austin, Texas, USA, $^{3}$Electrical and Computer Engineering Department, University of Maryland, College Park, Maryland, USA, $^{4}$Computer Science Department, University of Maryland, College Park, Maryland, USA. Email: \tt\small\{rchandra@utexas.edu\}}
}



\maketitle

\begin{abstract}

Better fuel efficiency leads to better financial security as well as a cleaner environment. We propose a novel approach for improving fuel efficiency in unstructured and unregulated traffic environments. Existing intelligent transportation solutions for improving fuel efficiency, however, apply only to traffic intersections with sparse traffic or traffic where drivers obey the regulations, or both. We propose \modelNew, a novel hybrid approach for cooperative intersection control in dynamic, multi-lane, unsignalized intersections. \modelNew is a hybrid solution that combines an auction mechanism and an optimization-based trajectory planner. It generates a priority entrance sequence for each agent and computes velocity controls in real-time, taking less than $10$ milliseconds even in high-density traffic with over $10,000$ vehicles per hour. Compared to fully optimization-based methods, it operates $100\times$ faster while ensuring fairness, safety, and efficiency. Tested on the SUMO simulator, our algorithm improves throughput by at least $25\%$, reduces the time to reach the goal by at least $70\%$, and decreases fuel consumption by $50\%$ compared to auction-based and signaled approaches using traffic lights and stop signs. \modelNew is also unaffected by unbalanced traffic inflows, whereas some of the other baselines encountered a decrease in performance in unbalanced traffic inflow environments.

\end{abstract}

\section{Introduction}
\label{sec: introduction}

Efficiency and safety are two of the core pillars defining a successful transportation system. With regard to road infrastructure, safety targets involve achieving a system capable of operating with zero collisions. Efficiency targets are more complex and can be broadly broken into time and energy. Efficiency in time involves getting from point to point in the least amount of time, while efficiency in energy focuses on using the least amount of fuel. Improvements in both time and energy efficiency can provide significant economic gains in terms of reducing fuel used and minimizing time wasted in traffic. In the United States in 2022, the average driver spent a fuel cost of $\$546$ and a time spent cost of $\$869$ with $51$ hours spent on the road\footnote{\href{https://inrix.com/scorecard/}{https://inrix.com/scorecard/}}. Even a $5\%$ improvement in efficiency would then provide the US with at least a $\$16.5$ $bn$ economic benefit. 

In addition to the savings in fuel costs and time saved, it is also important to consider the environmental gains. Fuel efficiency also results in a reduction of greenhouse gas emissions which would help combat the ever-present threat of global warming and climate change \cite{yoro2020co2, mikhaylov2020global}. Additionally, the resulting reduction in air pollution would have a major impact on health conditions, especially in urban areas where air quality is often poor. With the dwindling supply of global crude oil, reducing overall fuel consumption will also help buy additional time for renewable sources to become more efficient and accessible \cite{kabeyi2022sustainable}. 

\begin{figure} [t] 
\centering
\includegraphics[width=\columnwidth]{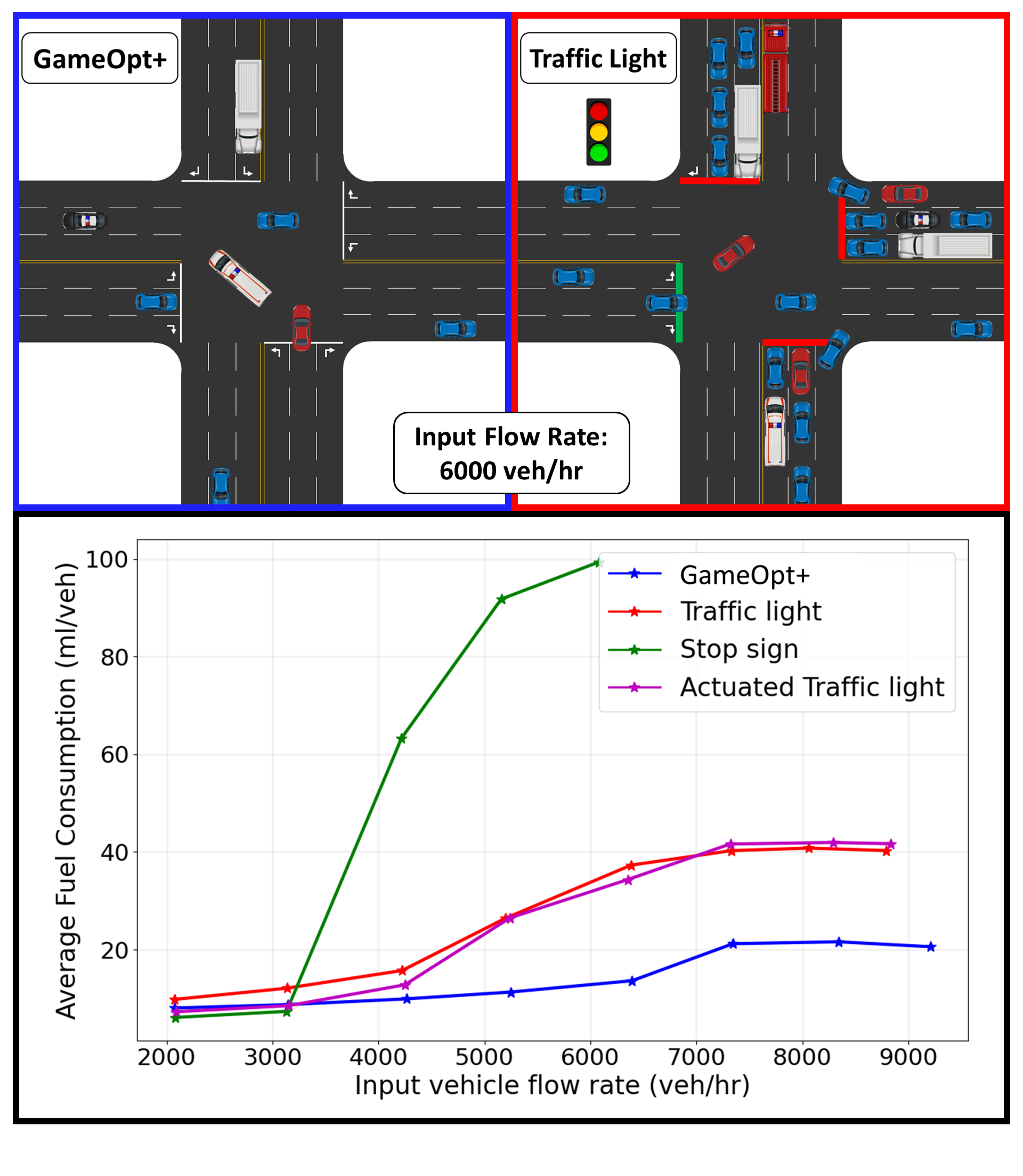}
\caption{\textbf{\modelNew:} We present a new approach for improving fuel efficiency via optimal real-time planning and control in dynamic, heterogeneous, multi-agent unsignalized intersections. We show that at identical input traffic flow levels, our approach results in the most fuel efficient traffic flow.}
\label{fig: cover}
\end{figure}
Fuel efficiency can be achieved in multiple ways such as optimizing the path of travel, minimizing acceleration and braking steps, and through vehicle design involving engine efficiency, aerodynamics, and energy regeneration \cite{romero2024strategies}. The global effort to push for hybrid and electric vehicles has also helped push the efficiency bar forward. While the task of improving vehicular design generally lies in the hands of automobile manufacturers, improvements to vehicle trajectories often require infrastructure and regulatory support. Since the majority of fuel efficiency gains can generally be expected from the second category it is essential that road infrastructure systems become more intelligent and capable.

Intelligent transportation systems (ITS)~\cite{chandra2019densepeds, chandra2019roadtrack, chandra2022towards, guan2022m3detr} utilize sensors, cameras, and communication systems to gather information about traffic conditions and optimize traffic flow by adjusting traffic signals to help reduce delays and improve fuel efficiency \cite{guerrero2018sensor}. While these sensors can measure factors such as vehicle speed, traffic volume, and congestion, the installation of sufficient sensors is often cost-prohibitive~\cite{chandra2023meteor}. In terms of control capabilities, traditionally transportation systems have been limited to simple methods such as traffic signals and variable speed limits \cite{qadri2020state}. However, the advent of Connected Autonomous Vehicles (CAVs) is about to revolutionize the sensing and control capabilities of an intelligent transportation system. CAVs have a multitude of onboard sensors such as cameras and rangefinders which allow them to accurately track the state of surrounding traffic. They also possess advanced communication systems which allow them to share this information with each other as well as with infrastructure facilities \cite{liu2023systematic}. Finally, in terms of control, CAVs can accurately plan trajectories and execute consistent maneuvers with high levels of predictability. They can also follow commands or guidelines set by the infrastructure control facilities in an ITS \cite{liu2023systematic}. As such, by leveraging the capabilities of CAVs, modern transportation systems would possess a significantly higher degree of sensing and control capability. These increased capabilities allow the control systems of ITS to pursue more sophisticated and better-performing control methods which leads to improved efficiency and safety.

Physical bottlenecks such as traffic intersections and highway merge junctions as well as phenomena such as traffic shock waves play a major role in reduced safety and increased congestion on road networks \cite{eom2020traffic, junhua2016modeling}. Of these, often the most challenging scenario is the unregulated traffic intersection, \textit{i.e.} intersections with no stop sign or traffic light. Unregulated intersections are responsible for a significant portion of road accidents, with $40\%$ of all crashes, $50\%$ of serious collisions, and $20\%$ of fatalities occurring there~\cite{grembek2018introducing}. To address this issue, the concept of connected autonomous vehicles (CAVs) equipped with vehicle-to-infrastructure (V2I) communication has emerged. By introducing additional sensing and actuation points, these CAVs enable the development of new algorithms to tackle unregulated intersections. Successful applications of this approach include handling complex traffic bottlenecks like highway merging and traffic shock waves\cite{basic,Nilesh2021merge,Nilesh2021shockwave}. Recent research has explored leveraging cooperation among CAVs to achieve safe, fair, and efficient intersection control, drawing on various fields such as game theory, auctions, optimization, and deep learning. However, no single solution has demonstrated the ability to satisfy all four key considerations: safety, efficiency, fairness, and real-time operation. Safety involves preventing collisions, efficiency focuses on maximizing intersection capacity, fairness requires equitable treatment of all agents, and real-time computation is crucial for adapting to dynamic changes in this fast-paced environment.


The problem of controlling intersections for multiple agents can be divided into two main phases. The planning phase focuses on determining the optimal entrance sequence, the order in which vehicles should enter the intersection. The control phase involves generating safe trajectories for all vehicles to follow in order to adhere to the selected sequence. The resulting target trajectories are then transmitted to each vehicle's local controller for execution. However, selecting the sequence becomes increasingly challenging as the number of vehicles grows, leading to an exponential increase in possibilities. This complexity is further compounded in multi-lane scenarios where multiple vehicles can enter the intersection simultaneously without conflicts to maximize efficiency. Practical intersection scenarios are also \textit{dynamic}, meaning that the number of vehicles requesting to cross the intersection changes over time. Finally, we must also consider the impact of heterogeneous vehicles. Different types of vehicles differ in their dynamics and requirements.


\subsection{Related Work}
\label{subsec: related_work}

\subsubsection{Solutions for Improving Fuel Efficiency}

Several approaches have been aimed at efficiency improvements for autonomous vehicles. By communicating traffic signal information to vehicles, an optimization-based control algorithm can be utilized to predict an efficient velocity trajectory \cite{Pred_cruise}. The objectives were selected to improve efficiency by reducing idle time at stoplights, arriving at green lights with minimal braking, and cruising at the desired speed.
Another optimization-based technique for intersection management focused on using cooperative adaptive cruise control to minimize delays at intersections \cite{iCACC}. The approach demonstrated that the reduction of delays at intersections led to significant savings in fuel consumption.

In relatively recent approaches, platoons (a group with a leading vehicle and following vehicles) are used to increase driving stability and reduce air resistance between vehicles which minimizes wasted gasoline. Additionally, reinforcement learning is utilized to calculate optimal start-to-end paths with the storage and computation load on network edges \cite{platoon_rl}. In creating the routes, the vehicles consider the fuel efficiency of a route as a key factor.

\subsubsection{Towards Navigating Unregulated Traffic Intersections}

Previous studies have explored the use of auction mechanisms for navigating unsignalized intersections in real time while ensuring fairness and feasibility~\cite{chandra2022gameplan, carlino2013auction}. However, these mechanisms are limited to \textit{static} scenarios with a fixed number of participating agents, resulting in subpar efficiency and throughput when the number of vehicles is variable. Incentive-compatible auctions, such as the mechanism proposed by Sayin et al.~\cite{sayin2018information}, allocate turns to agents based on their distance from the intersection and the number of passengers in their vehicles. Likewise, Carlino et al.~\cite{carlino2013auction} and Rey et al.~\cite{rey2021online} propose a similar approach but employ a monetary-based bidding strategy. Buckman et al.~\cite{gt6} incorporate a driver behavior model~\cite{schwarting2019social} into a first-in, first-out framework to account for human social preferences. More recently, Chandra et al.~\cite{chandra2022gameplan, cmetric, chandra2019graphrqi, chandra2021using} introduced an auction mechanism based on the driving behavior of the agents.


Autonomous vehicle navigation has also seen the application of game-theoretic approaches~\cite{li2020game, tian2020game, chandra2022gameplan}. Li et al.~\cite{li2020game} employ a stackelberg game where one agent assumes the role of the leader, following the first-out (FIFO) principle, while the other agent acts as a follower. In contrast, Tian et al.~\cite{tian2020game} adopt a recursive $k-$level approach, deriving strategies at each level from previous levels. However, in these approaches, all agents, except the ego agent at the first level, are treated as static entities.
 

Although optimization-based methods offer safety guarantees by computing optimal trajectories, their practical implementation in real-time is hindered by high computational costs. Existing approaches address this limitation by making unrealistic assumptions. Rios-Torres \textit{et al.} \cite{basic,Malik2019unconstrained} propose a real-time solution to the optimization problem using Hamiltonian analysis, but this approach does not extend well to fully constrained problems. Computation is also limited to scenarios with fewer vehicles on single-lane roads, excluding turning at intersections, to enable near real-time calculations~\cite{Bian2020opt,Pei2021spaceopt,Seyed2018opt}. In our previous work \cite{Nilesh2021merge}, Suriyarachchi \textit{et al.} applied a hybrid control approach combining rules and optimization to achieve real-time performance in the context of highway merging. However, relying solely on rule-based sequence selection leads to sub-optimal solutions.


Deep learning techniques, including reinforcement learning~\cite{capasso2021end, mavrogiannis2022b, mavrogiannis2020b, sathyamoorthy2020densecavoid} and recurrent neural networks~\cite{roh2020multimodal, chandra2019traphic, chandra2019robusttp, chandra2020forecasting}, have been utilized to train planning and control policies for agents approaching intersections. However, these policies often struggle to generalize effectively across different environments and lack guarantees regarding safety and fairness. Additionally, most learning methods face challenges when applied to dynamic scenarios involving multiple agents for planning and control.


In our previous work, we developed \model~\cite{gameopt}, a hybrid planning and control algorithm for navigating unsignalized dynamic intersections by combining auctions with optimal control. Key features of \modelNew involved the computation of a game-theoretic-based safe, fair, and efficient intersection merge sequence with a corresponding optimization-based control strategy that could function in real-time. The system was shown to handle multiple input lanes ad varying levels of traffic flow.
Despite its evident advantages over prior methods for navigation at unregulated intersections, \modelNew did not account for heterogeneous vehicles. Different types of vehicles differ in their fuel capacity and requirements.

\subsection{Main Contributions:}

We propose a novel approach, called \modelNew, that uses game theory and optimization to improve fuel efficiency.

\begin{itemize}
    \item \modelNew simulates heterogeneous vehicles that have different lengths, velocities, and accelerations.
    \item \modelNew also incorporates driver preference/aggressiveness that influences speed and fuel usage.
\end{itemize}
\section{Intersection Control Formulation}
\label{sec: problem_formulation}

Optimal vehicle flow at unregulated intersections maximizes fuel efficiency. We define optimal vehicle flow via the following characteristics:

\begin{enumerate}
    \item \textit{Fairness}: The velocities, $v_i$, determined should maximize utility for all agents and encourage incentive compatibility.
    \item \textit{Safety}: The trajectories produced by the optimizer should ensure collision-free paths.
    \item \textit{Efficiency}: The algorithm as a whole should optimize throughput, time-to-goal, and fuel efficiency.
    \item The planning and control system should not rely on accessing the objective or utility functions of other agents.
    \item The planning and control system should not assume a dynamics model; instead, it should calculate dynamics for each agent based on a simple forward motion model.
\end{enumerate}

To initiate the process, the initial stage entails establishing the physical attributes of an unsignalized intersection, along with comprehending the dynamics and control mechanisms of the involved CAVs.


\subsection{Modeling the Physical Intersection}
\label{subsec: physical_intersection}

An unsignalized intersection is characterized as a four-way crossing with no traffic signals or right-of-way regulations and can be either single-lane or multi-lane. In our work, we present an example of a single-lane intersection in Fig. \ref{fig: zones}. To facilitate control and coordination, we define a control zone of length $L_c$ along each arm of the intersection. Within this control zone, vehicles can exchange state information and receive actuation commands through V2I communication protocols. For the purpose of this research, we assume ideal conditions with zero transmission delay and a flawless V2I communication channel. The center of the intersection, known as the conflict zone, poses a risk of collisions. The number of vehicles present in the control zone of roads $0,1,2,3$ is represented by $n_1,n_2,n_3,n_4$, respectively, with the total number of vehicles in the control zone being $n=n_1$+$n_2$+$n_3$+$n_4$.


\begin{figure} [h!] 
\centering
\includegraphics[width=.9\columnwidth]{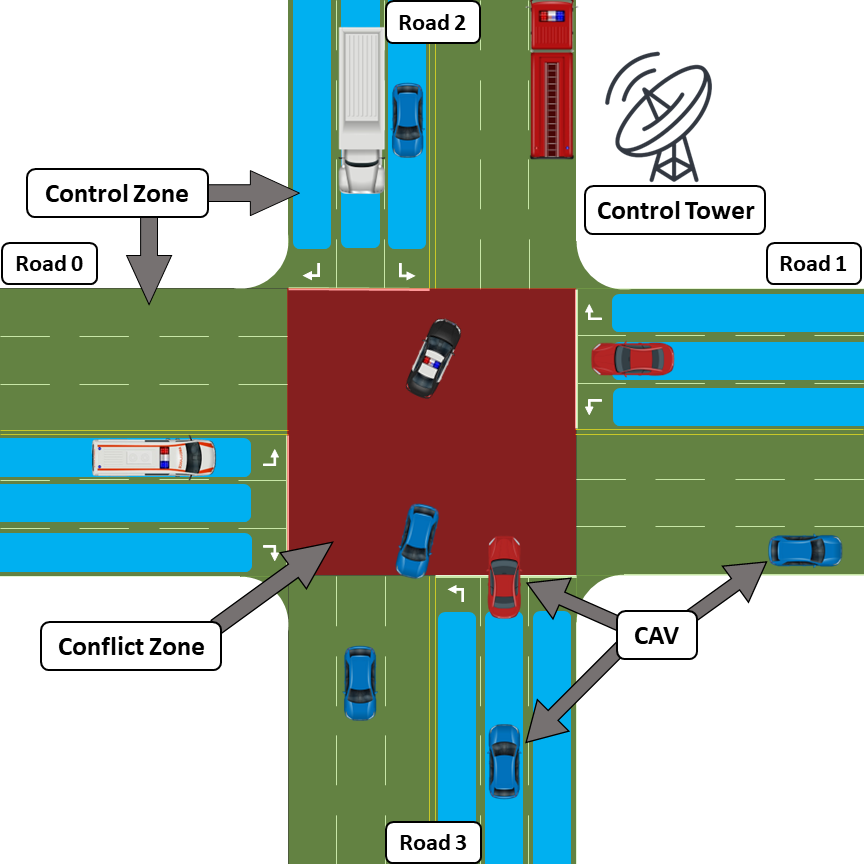}
\caption{\textbf{Regions of interest:} We highlight the control zone in which vehicles communicate with the control tower and the conflict zone in which vehicles cross over to their desired target road.}
\label{fig: zones}
\vspace{-7pt}
\end{figure}

The vehicles operating within the intersection have the autonomy to make their own decisions, including choosing to turn left, turn right, or proceed straight. It is possible to adjust the input flow rate of vehicles in each control zone of the intersection, as well as the proportion of vehicles opting to turn left, turn right, or go straight. The input flow rate is varied in the range of $2,000$ to $10,000$ vehicles per hour, with the generation of vehicles following a Poisson distribution.



\subsection{Vehicle Dynamics and modeling}
\label{sec: control_var_dynamics}
In real-world scenarios, vehicle dynamics are hard to model accurately due to their non-linearity. Standard practice in control theory suggests the use of a local controller, $z_i$, that produces low-level controls for the longitudinal and lateral motion of a vehicle according to the following equation, 

%
\begin{equation}
\label{eq:eq3}
    \frac{\delta s_i}{\delta t} = \mathcal{F}(t,s_i,z_i).
\end{equation}
\begin{figure*}[t] 
\centering
\includegraphics[width=\textwidth]{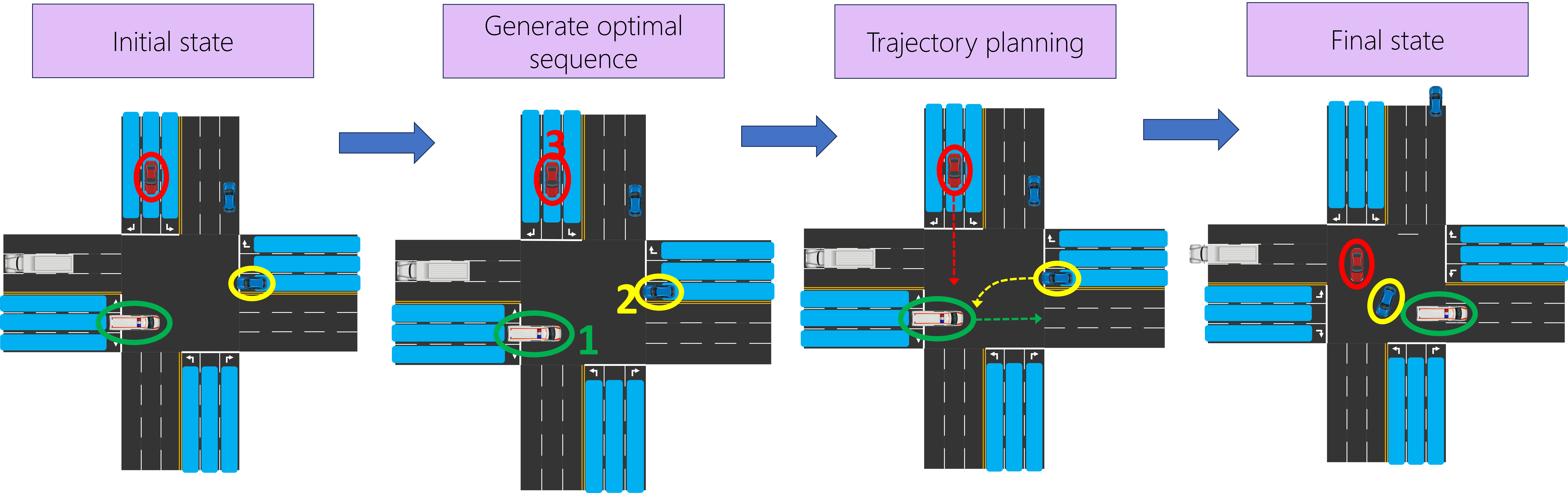}
\caption{\textbf{\modelNew overview:} Our approach begins by reading in the positions and velocities of all agents in the control zone (blue). Our approach is hybrid; in the planning phase, \modelNew collect the bids from every agent and generate an optimal priority sequence (Section~\ref{subsec: priority_order}). In the controls phase, we use an optimization-based trajectory planner to compute the optimal velocity for each agent that satisfies the priority order while simultaneously guaranteeing safety and real-time performance (Section~\ref{sec:optimization}).}
\label{fig: flow_diagram}
\vspace{-15pt}
\end{figure*}

\noindent Here, we represent $\mathcal{F}$ as the unknown non-linear dynamic function. $s_i$ denotes the distance of the agent $i$ from the intersection measured in the Frenet coordinate system. 

\noindent \textbf{Dynamics:} Since $\mathcal{F}$ is generally unknown, we define the high-level dynamics by the following velocity control scheme:
\begin{equation}
\begin{aligned}
\frac{\delta s_i}{\delta t}  &= v_i \\
v_i(t) &= \bm{u}_i(t) 
\end{aligned}
\label{eq:eq1}
\end{equation}
\noindent for $i\in\{1,\ldots,n\}$. 
 

\noindent \textbf{Control variables:} The control input, $\bm{u}_i(t)$, represents the command velocity for vehicle $i$. We compute the optimal value for $\bm{u}_i(t)$ with respect to safety, efficiency and reachability guarantees described in the following sections.

\noindent \textbf{State vector:} In addition to $s_i(t)$ and $v_i(t)$, we assign 
an indicator variable $l^c_i(t)\in\{0,1,\ldots,L_n-1\}$, which 
signifies which lane the vehicle is currently in, with $L_n$ representing the number of lanes on each road. We also obtain the length $l_i$, and maximum acceleration $a^{max}_i$ and deceleration $a^{min}_i$ capabilities of each CAV $i$.
In a scenario of multi-arm multi-lane intersections, we need to identify which direction each vehicle intends to take through the intersection. Therefore, we allocate each vehicles in the control zone into a group $G_i$ based on its desired trajectory (origin road and intended direction) as shown in section \ref{subsec: conflict_resolution}, in order to resolve conflicts inside the intersection.
%

\noindent \textbf{Handling heterogeneous vehicles:} When dealing with heterogeneous multi-class traffic we find that different classes of vehicles can have significantly different goals. Some classes may prefer to prioritize reaching their destination and speed while others would prefer to maintain fixed speeds and not perform too many acceleration tasks in order to save fuel. For example, an ambulance cares more about speed maximization, while a large semi-trailer truck would rather minimize its overall acceleration tasks due to the relatively large fuel costs associated with these actions. In addition to vehicle class we find that other factors such as total waiting time, current speed and individual vehicle intent or aggressiveness plays a major role in what features each car prioritizes. In order to achieve a more optimal control solution it is necessary that each vehicle should be treated differently based on its specific needs. 
As such we assign a vehicle type $c_i$ which depends on the class of vehicle and a driver preference (aggressiveness) value $d_i$, which takes a value from $0$ to $1$ indicating a vehicles bias towards either driving as fast as possible $(\text{towards }1)$ or reducing fuel usage $(\text{towards }0)$.
We also include the waiting time $w_i(t)$ indicating how long it has been since the vehicle entered the control zone.

The CAVs are thereby completely defined by the state vectors: 
\begin{equation}
\label{eq:eq2go_ext}
    \bm{\Phi}_i(t) = [s_i(t), v_i(t), l^c_i(t), w_i(t), c_i, d_i, G_i, l_i, a^{max}_i, a^{min}_i]^\top
\end{equation}

\noindent for $i\in\{1,\ldots,n\}$. We also define a parameter tuple $M_s = \{M_{sr},M_{sl}\}$, which represent the safety margins needed to prevent rear-end and lateral collisions.

\subsection{Dynamic Traffic Intersection Planning \& Control Problem}
\label{sec: initial_formulation}
The dynamic traffic intersection problem involves computing suitable, collision-free, continuous trajectories for all vehicles approaching an intersection to achieve efficient and fair traffic flow. 
This problem can be formulated as an optimal control problem, with the objective of computing the optimal command velocities $\bm{u}_i$ for each CAV $i\in\{1,\ldots,n\}$, resulting in the minimization of the maximum time taken by each CAV to cross the intersection:

\begin{equation}
\begin{aligned}
&~~ \min_{\{\bm{u}_i\}} ~~~ \max_i ~ t_f^i \\
&s.t. \quad\quad \mathcal{C}(\{\bm{\Phi}_i\},\bar v, \mathbb{S}(M_s))
\end{aligned}
\label{eq:opt_time}
\end{equation}
\noindent where $t_f^i$ is the intersection crossing time of vehicle $i$. $\mathcal{C}$ is a set of constraints, discussed in Section~\ref{sec:optimization}, that depend on the state vectors $\{\bm{\Phi}_i\}$ of the CAVs, and safety requirements $\mathbb{S}(M_s)$ which prevent collisions between vehicles.

Equation~\ref{eq:opt_time} is, however, intractable to optimize directly because it implicitly assumes that agents are navigating the intersection according to the optimal priority order, $q^*$. Equation~\ref{eq:opt_time} can be equivalently solved by optimizing jointly over sequences, $q$, and the control velocities, $\bm{u}_i$, needed to implement the sequence.

\begin{equation}
\begin{aligned}
&~~ \min_{\{\bm{u}_i,q\}} ~~~ \sum_{i=1}^n 
    \lambda (\bm{u}_i-\bar v)^2 + (1-\lambda) (\bm{u}_i-v_i)^2 \\
&s.t. \quad\quad \mathcal{C}(\{\bm{\Phi}_i\},\bar v,\mathbb{S}(M_s),q)
\end{aligned}
\label{eq:opt_velocity}
\end{equation}
\noindent where $\bar v$ is the speed limit. We solve this optimization problem in Section~\ref{sec:optimization}.\\

\noindent\textbf{Intractability of Equation~(\ref{eq:opt_velocity}):} The optimal control velocities, $\bm{u}_i$, depend on the sequence order $q$ of the vehicles entering the conflict zone. The naive solution is to solve a mixed-integer optimization problem for every possible entrance sequences $q$ by performing an exhaustive search. This process involves two sub-problems---generating a sequence $q$ followed by computing the optimal velocity commands $\bm{u}_i$ for that sequence. The total number of possible sequences $q$ grows exponentially with the number of vehicles in the control zone which would take the form $\bigO{\frac{(n_1+n_2+n_3+n_4)!}{n_1!n_2!n_3!n_4!}}$, where $n$=$n_1$+$n_2$+$n_3$+$n_4$ is the number of vehicles in the control zone. Therefore, current methods to solve Equation~\ref{eq:opt_velocity} are intractable.

We propose the use of auctions to obtain an optimal priority sequence order $q^{*}$ in $\bigO{n\log n}$ time, significantly reducing the combinatorial runtime complexity of the mixed-integer formulation. We run the optimization over $\bm{u}_i$ for the selected optimal sequence $q^{*}$ which can be solved in real time.

\subsection{Sponsored Search Auctions}
\label{subsec: SSA_background}

Sponsored search auctions (SSAs) are a game-theoretic mechanism that are used extensively in internet search engines for the purpose of internet advertising~\cite{roughgarden2016twenty}. In an SSA, there are $K$ items to be allocated among $n$ agents. Each agent $a_i$ has a private valuation $\bm{v}_i$ and submits a bid $\bm{b}_i$ to receive at most one item of value $\bm{\alpha}_i$. A strategy is defined as an $n$ dimensional vector, $\bm{b} = (\bm{b}_i \cup \bm{b}_{-i})$, representing the bids made by every agent. $\bm{b}_{-i})$ denotes the bids made by all agents except $a_i$. Furthermore, let $\bm{b}_1 > \bm{b}_2 > \ldots > \bm{b}_K$ and $\bm{\alpha}_1 > \bm{\alpha}_2 > \ldots > \bm{\alpha}_K$. The allocation rule is that the agent with the $i^\textrm{th}$ highest bid is allocated the $i^\textrm{th}$ most valuable item, $\bm{\alpha}_i$. The utility $\bm{u}_i$~\cite{roughgarden2016twenty} incurred by $a_i$ is given as follows,

\begin{equation}
 \bm{u}_i (\bm{b}_i) =  \bm{v}_i \bm{\alpha}_i - \sum_{j=i}^k \bm{b}_{j+1} \left( \bm{\alpha}_j - \bm{\alpha}_{j+1} \right).
 \label{eq: utility_template}
\end{equation}

\noindent In the equation above, the quantity on the left represents the total utility for $a_i$ which is equal to value of the allocated goods $\bm{\alpha}_i$ minus a payment term. The first term on the right is the value of the item obtained by $a_i$. The second term on the right is the payment made by $a_i$ as a function of bids $\bm{b}_{j > i}$ and their allocated item values $\bm{\alpha}_j$. We refer the reader to Chapter $3$ in~\cite{roughgarden2016twenty} for a derivation and detailed analysis of Equation~\ref{eq: utility_template}.

In our approach, we re-cast Equation~\ref{eq: utility_template} through the lens of a human driver. More specifically, the term $\bm{v}_i \bm{\alpha}_i$ denotes the time reward gained by driver $a_i$ by moving on her turn. The payment term represents a notion of risk~\cite{wang2020game} associated with moving on that turn. It follows that an allocation of a conservative agent to a later turn (smaller $\bm{\alpha}$) also presents the lowest risk and vice-versa.

\section{\modelNew}
\label{sec: gameopt}

The overall approach is depicted in Figure~\ref{fig: flow_diagram}, showcasing the flow of computation. Initially, the positions, velocities, and initial bids from all agents within the control zone are acquired through V2I communication (Section~\ref{sec: problem_formulation}). Subsequently, in the planning phase, this information, together with the processed bids, is utilized to generate an optimal priority order or sequence, dictating the order of agent entry into the intersection (Section~\ref{subsec: priority_order}). Moving into the control phase, our approach employs an optimization-based trajectory planner to determine the optimal velocity for each agent (Section~\ref{sec:optimization}). This optimization process ensures adherence to the priority order, while simultaneously upholding safety and real-time performance guarantees.


\subsection{\auctionname: Selecting a Priority Order, $q$}
\label{subsec: priority_order}

Choosing an optimal ordering for agents to navigate unsignalized traffic scenarios is equivalent to allocating each agent a turn in which they would cross the intersection. Such an allocation depends on the incentives of the agents which, in many cases, are not known apriori. Auctions model the incentives of agents in unsignalized traffic scenarios using an optimal combination of bidding, allocation, and payment strategies. In the rest of this section, we present the auction framework for generating an optimal priority order, followed by an analysis of its fairness.

The Sponsored Search Auction (SSA), as detailed in Section~\ref{subsec: SSA_background}, presents an optimal mechanism for generating a priority order for agents. This algorithm allows the agent with the highest priority bid to navigate the scenario first, followed by the agent with the next highest priority, and so on. With a polynomial runtime, dominated by sorting the agents' bids, the SSA is an efficient solution~\cite{clrs}. In GameOpt~\cite{gameopt}, agents' bids were based on a single dimension - the current kinematics of each agent. However, in this work, we propose a more advanced bidding language where agents can submit multi-dimensional bids, expressing their preferences using various features or currencies such as driver behavior-based~\cite{chandra2022gameplan}, distance-based~\cite{gt6}, and time-based~\cite{carlino2013auction}.

Formally, we represent the set of bidding currencies as $\mathscr{F}^m$, where $f \in \mathscr{F}^m$ denotes an $m$-dimensional feature vector comprising bidding currency values. We further define $\Omega^m \subseteq \mathbb{R}^m$ as the space of coefficient vectors $\omega \in \Omega^m$. A master bid, $\bm{b}_i: \mathscr{F}^m \times \Omega^m \longrightarrow \mathbb{R}$, is then generated and passed on to the SSA. To address multi-agent dynamic traffic, we implement an overflow strategy where, if an agent with a higher priority is behind an agent with a lower priority, the former transfers a portion of their bid to the latter, similar to the Social Forces model.

Typically, SSAs employ a single bidding currency in their frameworks, reducing the space of bidding strategies to $\mathscr{F}^1$. Although some recent studies~\cite{multikeyword1, multikeyword2} propose extensions that allow bidding on multiple slots or keywords, these auctions still utilize a single currency (bidding over keywords). However, utilizing a single bidding currency poses several challenges, primarily that if an agent has a specific budget $B$ for multiple keywords and spends $90\%$ of $B$ towards a high-profile keyword, they may not have sufficient funds left to bid on other keywords.

In contrast, multi-currency bidding strategies theoretically allow agents to realize multiple budgets and thus, expand their power to express their preferences. However, multi-currency bidding strategies are challenging to implement in practice, as many currencies lack constraints, and the dimension of the space $\mathscr{F}^m$ grows exponentially with $m$.

In this work, we propose a method for handling multi-currency bidding strategies. Specifically, we consider the features \emph{time to intersection} ($T^i_I$), \emph{distance to intersection} ($D^i_I$), \emph{waiting time} ($W^i_I$), and \emph{driver preference or assertiveness} ($A^i_I$). Thus the full feature (bidding currency) set is defined as $f= [T^i_I, D^i_I, W^i_I, A^i_I]$, each of which constitutes a continuous feature, with $m=4$. The master bid $\bm{b}_i$ is then computed as follows:

\begin{equation}
\bm{b}_i = \omega^\top f
\end{equation}

Which using the coefficients $\omega_k$ where $k\in\{1,\dots,m\}$ can also be written as,

\begin{equation}
\bm{b}_i = \omega_1*T^i_I + \omega_2*D^i_I + \omega_3*W^i_I + \omega_4*A^i_I
\label{eq: master_bid}
\end{equation}

\noindent Next, we describe the equations for computing each bidding feature. Recall that $s_i(t)$ measures the distance of agent $i$ from the intersection, and we use $\tau_i(t) = s_i(t)/v_i(t)$ to denote the time in which $i$ shall reach the intersection based on its current velocity and $s_i(t)$. The bidding capacity of vehicles should be higher when they are closer to the intersection in terms of time and distance. Therefore, the features $T^i_I$ and $D^i_I$ for agent $i$ are computed as follows:
\begin{equation}
T^i_I =  (c_1 - \tau_i(t))
\end{equation}
\begin{equation}
D^i_I =  (c_2 -  s_i(t))
\end{equation}

where $c_1$ and $c_2$ are constants representing maximum bounds on control zone length and time to intersection respectively. Next, agents specify a preference for waiting in queue for a long time, by the cumulative time it has been waiting in the control zone $w_i(t)$ (available in vehicle state vector in equation (\ref{eq:eq2go_ext})). Feature $W^i_I$ is then set as $W^i_I  =  w_i(t)$. Finally, agents also express their behavior through a driver preference assertiveness feature $A^i_I$. For these features vehicles with higher waiting time and higher preference have a higher bidding capacity. The computation of the preference feature $A^i_I$ additionally incorporates the effects of heterogeneous vehicle types. Therefore, different classes of vehicles will have varying bounds on their preference bidding capacity $A^i_I$. For example, on a scale of 1 to 10, emergency vehicles would have the highest bidding range of \{7 to 10\} and passenger cars would have a lower range such as \{1 to 5\}. Therefore the maximum bidding capacity within the driver preference assertiveness feature $A^i_I$ is dependant on the vehicle class. This allows vehicles with higher priority needs to be given preference over other vehicle classes during the sequence order generation phase. Here, the actual vehicle class $c_i$ and individual driver preference $d_i$ is available in the vehicle state vector in equation (\ref{eq:eq2go_ext}).

By using the combined master bid obtained for each CAV $i$ as shown in equation (\ref{eq: master_bid}), a sponsored search auction is conducted to generate the final feasible merging sequence $q$. Here, vehicles with higher combined bids would get the opportunity to pass through the intersection before vehicles with lower bids. As the bidding system is directly linked to the dynamics of the system, the sequence generated by the auction ensures that vehicles higher up in the sequence have the dynamic capability to pass the intersection before vehicles lower down in the sequence. Certain other features such as the overflow strategy discussed also play a roll in ensuring the overall feasibility of the generated sequence.

\subsubsection*{\ul{Generation of multiple feasible sequences}}
\label{subsec: multi_seq_gen}
Thus far we described the method of generating a single feasible sequence $q$, using the set of bidding currencies as $\mathscr{F}^m$, and the space of coefficient vectors $\omega \in \Omega^m$. The choice of coefficients $\omega_k$ where $k\in\{1,\dots,m\}$, is a tuning parameter available to the central controller which can be changed in order to generate multiple feasible sequences. It is difficult to tell which sequence would be truly optimal in terms of maximizing vehicle throughput while minimizing fuel consumption at this stage. Therefore it is the responsibility of the \auctionname subsystem (section \label{subsec: priority_order}) of the control system, to generate a suitable collection of candidate feasible sequences to be tested by the optimization \optname subsystem (section \label{sec:optimization}) for optimality. Thus, each of the candidate sequences $q$, generated in \auctionname, will be fed into the optimization \optname subsystem in parallel to find the best sequence $q^*$, whose corresponding target command velocities would be sent to each CAV in the control zone.

\subsubsection*{\ul{Fairness Analysis of $q$}}
\label{subsec: optimality}

 Looking at the game-theoretic aspect, fairness is achieved when no agent has an incentive to ``cheat'' meaning that each agent's dominant strategy is to bid their true valuation, denoted as $\bm{\zeta}_i$. This principle, known as incentive compatibility, aims to encourage traffic agents to truthfully bid their actual value ($\bm{b}_i = \bm{\zeta}_i$) as the optimal strategy~\cite{roughgarden2016twenty}. Chandra et al.~\cite{chandra2022gameplan} demonstrated that Single-Stage Auctions (SSAs) implemented in static intersections with one agent per arm are incentive-compatible. We expand the theoretical framework of static SSAs to dynamic intersections involving multiple vehicles, considering the occurrence of \textit{overflow} when a lower-priority vehicle obstructs a higher-priority one.


In~\cite{chandra2022gameplan}, the authors show that the sponsored search auction is game-theoretically optimal for static intersections consisting of $1$ vehicle on each arm of the intersection. In this work, we extend the proof for dynamic traffic intersections where we must take into account three factors--$(i)$ multiple vehicles on each arm of the intersection and $(ii)$ lower priority vehicles blocking higher priority vehicles, and $(iii)$ providing waiting time rewards to agents.

\begin{theorem}
\textbf{Incentive compatibility for dynamic intersections:} For each agent $i$ for $i=1,2,\ldots,n$ and $n = n_1 + n_2 + n_3 + n_4$ where $n_j$ represents the number of vehicles on the $j^\textrm{th}$ arm, bidding $\bm{b}_i = \bm{\zeta}_i$ is the dominant strategy. 
\label{thm: incentive_compatibility}
\end{theorem}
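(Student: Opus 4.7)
The plan is to build on the static incentive-compatibility result of~\cite{chandra2022gameplan} by isolating each of the three new ingredients---multiple vehicles per arm, overflow, and waiting-time rewards---and showing that none creates a profitable deviation from truthful bidding. Following the Myerson-style decomposition of Equation~\ref{eq: utility_template}, I would first rewrite the utility of agent $i$ allocated to sequence position $k$ as $\bm{u}_i(\bm{b}_i) = \bm{\zeta}_i \bm{\alpha}_k - p_k(\bm{b}_{-i})$, where the payment $p_k$ depends only on the bids of the \emph{other} agents and not on $\bm{b}_i$ itself. Incentive compatibility then reduces, as in the base case, to a two-sided case analysis: an upward deviation $\bm{b}_i > \bm{\zeta}_i$ either leaves the allocation unchanged (same $\bm{\alpha}_k$, same payment) or promotes $i$ to a slot $\bm{\alpha}_{k'}$ with $k'<k$ at a payment strictly exceeding the valuation gain, and symmetrically for downward deviations.

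Next I would handle the three extensions one at a time. For factor~$(i)$, note that \auctionname sorts all $n$ bids into a single global ordering, so monotonicity of the allocation in $\bm{b}_i$ is preserved and the Myerson decomposition applies verbatim. For factor~$(ii)$, consider a same-arm pair $(A,B)$ in which higher-priority $A$ is physically behind lower-priority $B$; under the overflow rule $A$ transfers a share of its bid to $B$, forming an effective bid $\tilde{\bm{b}}_B$. I would argue that (a) $B$'s payment formula is unaffected by this transfer because it is determined by bids strictly below $B$, (b) $A$'s crossing time is lower-bounded by $B$'s, so any boost to $B$'s rank weakly improves $A$'s utility, and (c) any misreport by $A$ either leaves the sorted order unchanged or delays $B$, which propagates back to $A$ through the blocking constraint and strictly decreases $A$'s utility. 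For factor~$(iii)$, the waiting-time reward $W^i_I = w_i(t)$ is an objectively observable quantity folded additively into the valuation $\bm{\zeta}_i$; since $w_i(t)$ is not self-reported, it cannot be manipulated, and the dominant-strategy property of truthful bidding on the remaining currencies is preserved.

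The main obstacle will be factor~$(ii)$: the overflow coupling makes $A$'s utility depend indirectly on $B$'s position through a physical rather than monetary channel, and the standard Myerson argument assumes payments are the only coupling between allocation and utility. The care point is to formalize the blocking constraint so that the effective payment $A$ incurs---including the overflow transfer---still equals the externality $A$ imposes on lower-ranked agents, and to rule out pathological cases in which a coordinated deviation by $A$ along both the direct bid and the overflow share simultaneously raises $A$'s slot and lowers $A$'s payment. I expect this to require a small monotonicity lemma on the overflow map and a compatibility check that the multi-currency aggregation in Equation~\ref{eq: master_bid} collapses into a single scalar that the \auctionname sorter treats as an ordinary SSA bid.
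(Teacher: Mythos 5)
There is a genuine gap. The paper's proof of Theorem~\ref{thm: incentive_compatibility} turns on one structural idea that your proposal omits: the SSA must not be run as a single global auction over all $n$ agents, but separately within each set of mutually \emph{conflicting} lane groups (Section~\ref{subsec: conflict_resolution}). Your step for factor~$(i)$ asserts that the global sort preserves monotonicity so the Myerson decomposition applies ``verbatim,'' but this is exactly where the argument breaks in the dynamic multi-lane setting: an agent can be allocated slot $k$ because agents with higher bids occupy slots $1,\ldots,k-1$, even though some of those agents belong to non-conflicting groups and therefore impose no physical delay on it (conversely, an agent could be nominally placed ahead of one it does conflict with, risking a collision). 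The slot value $\bm{\alpha}_k$ handed out by the global auction then no longer equals the value the agent actually realizes, the payment in Equation~\ref{eq: utility_template} no longer equals the externality the agent imposes, and truthful bidding ceases to be dominant --- an agent could shade its bid, drop in the nominal order past non-conflicting agents only, and pay less while crossing at the same time. The paper repairs this by identifying conflicting versus non-conflicting groups, running the SSA only within conflicting groups, and concluding incentive compatibility group-by-group, since non-conflicting groups enter simultaneously and never interact. Without this decomposition your two-sided deviation analysis is not sound.

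Your treatment of factors~$(ii)$ and~$(iii)$ is closer to what the paper actually does, though the paper does not fold them into this proof: overflow is isolated in Theorem~\ref{thm: overflow}, which derives an explicit upper bound on the transferred amount under which incentive compatibility survives (your proposed monotonicity lemma would need to reproduce that quantitative bound rather than argue qualitatively), and the waiting-time reward is dispatched by the observation, credited to Carlino et al.~\cite{carlino2013auction}, that multiplying a bid by a waiting-time reward does not change incentive compatibility --- essentially your non-manipulability argument. So the periphery of your plan is workable, but the core of factor~$(i)$ needs the conflict-group restriction before any Myerson-style argument can be applied.
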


\begin{proof}

\begin{enumerate}
    \item Use SSA to compute an ordering over all agents. But this can generate false bids overbids and underbids.
    
    \item For example, an agent may be assigned to go second on account of a higher priority vehicle.
    
    \item But the latter vehicle may not even conflict with the first vehicle. Conversely, agent $i$ might be asked to go first, risking a collision.
    
    \item Therefore, it is important to identify conflicting and non-conflicting lane groups. Run the SSA in conflicting groups only.
    
    \item At this point, we can say that since we have optimized conflicting groups and non-conflicting groups do not collide anyways, the resulting auction is incentive-compatible
\end{enumerate}

\end{proof}

The next desired property in a fair auction is welfare maximization~\cite{roughgarden2016twenty} which maximizes the total time reward (Section~\ref{subsec: SSA_background}) earned by every agent. 
We show that SSAs maximize social welfare for dynamic intersections as well.

\begin{theorem}
\textbf{Welfare maximization for dynamic intersections:} Social welfare of an auction is defined as $\sum_i \bm{\zeta}_i\bm{\alpha}_i$ for each agent $i$ for $i=1,2,\ldots,n$ and $n = n_1 + n_2 + n_3 + n_4$ where $n_j$ represents the number of vehicles on the $j^\textrm{th}$ arm. Bidding $\bm{b}_i = \bm{\zeta}_i$ maximizes social welfare for every agent. 
\label{thm: Welfare_Maximizing}
\end{theorem}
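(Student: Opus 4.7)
The plan is to extend the classical welfare-maximization result for single-shot SSAs to our dynamic multi-arm setting by combining Theorem~\ref{thm: incentive_compatibility} with the rearrangement inequality, and then showing that conflict-group decomposition, overflow bid-transfer, and waiting-time rewards each preserve the welfare-maximizing pairing.

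First I would establish the static base case. By Theorem~\ref{thm: incentive_compatibility}, truthful bidding $\bm{b}_i = \bm{\zeta}_i$ is the dominant strategy, so the SSA's sort-and-allocate rule assigns the $k$-th largest valuation to slot $\bm{\alpha}_k$, which by the rearrangement inequality maximizes $\sum_i \bm{\zeta}_i \bm{\alpha}_{\sigma(i)}$ over all permutations $\sigma$. This handles the single-arm case directly and mirrors the standard welfare proof in~\cite{roughgarden2016twenty}.

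Next I would lift this to the dynamic, multi-arm case by addressing the three complications listed before the theorem. For multiple vehicles per arm, I would partition the agents into conflict groups $G_1,\ldots,G_\ell$ using the conflict-resolution machinery of Section~\ref{subsec: conflict_resolution}; since non-conflicting groups cross the intersection in parallel without interfering, social welfare decomposes as $\sum_j \sum_{i \in G_j} \bm{\zeta}_i \bm{\alpha}_i$, and the rearrangement argument applies within each $G_j$ independently. For the overflow case, where a low-priority vehicle physically blocks a higher-priority one, the \auctionname overflow rule transfers a portion of the blocked agent's bid forward, effectively re-sorting the pair so that the realized ordering still pairs valuations with slot values monotonically. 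For waiting-time rewards, I would note that $W^i_I$ enters additively into the master bid in Equation~\ref{eq: master_bid}, so truthful bidding continues to produce a sorted order that matches the sorted slot values and therefore continues to maximize the welfare sum.

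The main obstacle will be making the overflow argument rigorous. The physical constraint that a blocked vehicle cannot overtake its leader means the idealized SSA allocation is not always implementable on the road, and I will need to formalize ``social welfare'' when two agents effectively merge into a platoon. Concretely I would define a realized slot value $\bm{\tilde\alpha}_i$ that accounts for the delay imposed by overflow, then show that the bid-transfer rule guarantees $\sum_i \bm{\zeta}_i \bm{\tilde\alpha}_i \ge \sum_i \bm{\zeta}_i \bm{\alpha}_{\sigma(i)}$ for any alternative bidding permutation $\sigma$. The remaining steps are routine consequences of the rearrangement inequality applied group-by-group.
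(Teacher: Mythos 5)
Your proposal matches the paper's proof in its essentials: both decompose social welfare over the non-conflicting lane groups and reduce each group to the static SSA welfare result of Chandra et al., the paper formalizing the decomposition as an induction on the number of conflicting groups ("a sum of two optimized terms remains maximum") while you write the group sum directly and make the within-group step explicit via the rearrangement inequality. Your additional treatment of overflow implementability and waiting-time rewards goes beyond the paper's proof of this theorem, which addresses neither (overflow is deferred to Theorem~\ref{thm: overflow} and waiting time to a closing remark citing Carlino et al.), so that material is a strengthening of the argument rather than a divergence from it.
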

\begin{proof}

We can show that welfare is maximum for each set of conflicting lane group. We need to show that social welfare of the system is maximized when the welfare of each individual conflicting lane group is maximized. Our proof is based on induction. Base case ($n=1$): for $n=1$, this reduces to the static auction case~\cite{chandra2022gameplan}. Hypothesis: Suppose the current system consists of $n>1$ conflicting groups and that the social welfare of this system is maximized. Then we want to show that the addition of $\vts{n+1}$ agents (belonging to the $(n+1)^\textrm{th}$) conflicting group also maximizes social welfare of the system. Now, welfare of system with $n$ conflicting groups is

\[\sum_{i=1}^n\sum_{j=1}^{\vts{i}}\zeta^i_j\alpha^i_j\]

\noindent From the hypothesis, we know that the above is maximum. Now if the $(n+1)^\textrm{th}$ conflicting group is added, then we can maximize that using the static SSA auction~\cite{chandra2022gameplan}. The resulting sum is a sum of two optimized terms, so the resulting welfare of the system remains maximum.
\end{proof}

Finally, we propose a novel strategy to address overflow by transferring a portion of the higher priority agent's bid to the lower priority agent i. We denote such a transfer by $(\hat a, \hat b = a \xrightarrow{c} b)$, where $\hat a = a-c, \hat b = b+c$. More formally, 

\begin{theorem}
\textbf{Overflow prevention:} In a current SSA, suppose there exists $ (i,j) \in [n_k] \times [n_k], k=1,2,3,4$, such that $\bm{\zeta}_i > \bm{\zeta}_j$ and $s_i[t] > s_j[t]$. Let $ \bm{\hat \zeta}_i, \bm{\hat \zeta}_j = (\bm{\zeta}_i \xrightarrow{q} \bm{\zeta}_j)$. If 
\[ q < \bm{\zeta}_i \left( 1-\frac{\alpha_{i+m}}{\alpha_i}\right) - \sum_{s=i}^{i+m-1}\bm{\zeta}_{s+1}\left( \frac{\alpha_s - \alpha_{s+1}}{\alpha_i} \right),\] 

\noindent the new SSA with $\bm{\hat \zeta}_i, \bm{\hat \zeta}_j$ as the new priority values for $i,j$ is incentive compatible.
\label{thm: overflow}
\end{theorem}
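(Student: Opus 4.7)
The plan is to extend the incentive-compatibility argument of Theorem~\ref{thm: incentive_compatibility} by analyzing how the deterministic overflow transfer $(\bm{\zeta}_i \xrightarrow{q} \bm{\zeta}_j)$ perturbs the SSA allocation and payment for agent $i$. Because the transfer only raises $j$'s effective bid and lowers $i$'s while leaving all other agents' bids and allocations unchanged, the binding constraint for incentive compatibility is that agent $i$'s utility under truthful bidding $\bm{b}_i = \bm{\zeta}_i$ remains non-negative and dominates any deviation; agent $j$'s incentive to bid truthfully is only strengthened by the bid boost, and all other agents inherit dominant strategies directly from Theorem~\ref{thm: incentive_compatibility}.

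First, I would quantify the change in agent $i$'s utility induced by the transfer. Using the SSA payment formula in Equation~\ref{eq: utility_template}, the pre-transfer utility is
\[
u_i = \bm{\zeta}_i \alpha_i - \sum_{s=i}^{k} \bm{\zeta}_{s+1}(\alpha_s - \alpha_{s+1}).
\]
After reducing $i$'s effective bid by $q$, agent $i$ drops by some integer $m \geq 0$ slots (where $m$ counts the agents whose valuations lie between $\bm{\zeta}_i - q$ and $\bm{\zeta}_i$), yielding
\[
\hat u_i = \bm{\zeta}_i \alpha_{i+m} - \sum_{s=i+m}^{k} \bm{\zeta}_{s+1}(\alpha_s - \alpha_{s+1}).
\]

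Second, I would impose the condition that agent $i$ still prefers truthful bidding by requiring the effective transfer cost $q \alpha_i$ not to exceed the utility differential $u_i - \hat u_i$. Telescoping the payment sums yields
\[
u_i - \hat u_i = \bm{\zeta}_i(\alpha_i - \alpha_{i+m}) - \sum_{s=i}^{i+m-1} \bm{\zeta}_{s+1}(\alpha_s - \alpha_{s+1}),
\]
so the inequality $q \alpha_i < u_i - \hat u_i$, after dividing through by $\alpha_i$, rearranges exactly to the bound stated in the theorem. Combined with the standard SSA argument that both over-bidding and under-bidding are strictly sub-optimal---now applied to the perturbed allocation---this shows that $\bm{b}_i = \bm{\zeta}_i$ remains a dominant strategy under the modified auction.

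The main obstacle I foresee is carefully handling the dependence of the rank-drop $m$ on the particular set of intermediate valuations and on $q$ itself. I would address this by conditioning on the realized $m$, so that the theorem's bound is interpreted as a family of conditions on $q$ parameterized by the (at most $n-i$) possible values of $m$, with the mechanism selecting $q$ admissible for the resulting drop. The edge case $m = 0$ (no re-ranking) is trivial, and the conflict-group decomposition from Theorem~\ref{thm: incentive_compatibility} ensures the transfer only affects agents within a single conflicting lane group, so cross-group incentives are unperturbed.
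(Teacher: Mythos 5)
Your derivation is essentially the paper's own proof: the same telescoping of the SSA payment sum between slots $i$ and $i+m$, followed by dividing through by $\alpha_i$, yields the identical bound on $q$. The only divergence is interpretive --- the paper reads $m$ as the number of slots agent $i$ \emph{gains} because boosting the blocking agent $j$ unclogs the lane, so the condition is exactly $\Delta u > 0$ (the jump is worth the transfer cost $q\alpha_i$), whereas you read $m$ as a demotion caused by $i$'s reduced bid; the algebra is unaffected, but the paper's reading is the one that actually ties the inequality to agent $i$'s willingness to execute the transfer truthfully, so you should adopt that orientation in the final write-up.
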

\begin{proof}

The difference in utility for agent $i$ by transferring some of her valuation to agent $j$ is given by the following equation. Here, we assume that by executing this transfer, agent $i$ gained a jump of $m$:

\begin{equation}
\begin{split}
    \Delta u = (v_k - q)\alpha_k - \big[& v_{k+1}(\alpha_k-\alpha_{k+1} + v_{k+2}(\alpha_{k+1}-\alpha_{k+2} + \ldots\\
    &+ v_{k+m}(\alpha_{k+m-1}-\alpha_{k+m}) \big] - v_k\alpha_{k+m}\\
\end{split}
\end{equation}

Rearranging the terms, $\Delta u > 0$ if,

\begin{equation}
\begin{split}
    &v_k(\alpha_k - \alpha_{k+m}) - \sum_{j=k}^{k+m-1}v_{j+1}(\alpha_j - \alpha_{j+1}) > q\alpha_k\\
    \implies&q < v_k(1 - \frac{\alpha_{k+m}}{\alpha_{k}}) - \frac{1}{\alpha_k}\sum_{j=k}^{k+m-1}v_{j+1}(\alpha_j - \alpha_{j+1}) \\
\end{split}
\end{equation}

\end{proof}

As a final remark, Carlino et al.~\cite{carlino2013auction} show that multiplying a bid by a waiting time reward does not change the incentive compatibility of the auction. 
We defer the proofs of Theorems~\ref{thm: incentive_compatibility},~\ref{thm: Welfare_Maximizing}, and~\ref{thm: overflow} to \cite{gameoptPage}.

\subsection{Multi-lane Intersection Planning and Control }
\label{subsec: conflict_resolution}

An essential feature of our approach is its effectiveness in efficiently managing traffic flow in multi-lane intersections. However, transitioning from a single-lane to a multi-lane setting presents unique challenges. Assumptions that hold true in single lanes, such as the presence of only one vehicle at a time in the conflict zone, no longer hold in multi-lane scenarios. To address this, we introduce a classification system that groups all vehicles in the control zone ($G_i$) based on their intended trajectory, including the origin road and their intention. Each group is denoted as \textit{x-y: x=road\_num, y=intention} (with intention values assigned as: \textit{0=right\_turn, 1=go\_straight, 2=left\_turn}). For instance, a vehicle from road 1 intending to make a left turn would belong to group 1-2. In Fig. \ref{fig: conflict_groups}, we illustrate the possible trajectories for vehicles in each lane and the corresponding labeling used to assign groups based on these trajectories, indicated on each lane.

\begin{figure}[t]
\begin{subfigure}[h]{.9\columnwidth}
\vspace{5pt}
\centering
    \includegraphics[width = 0.9\textwidth]{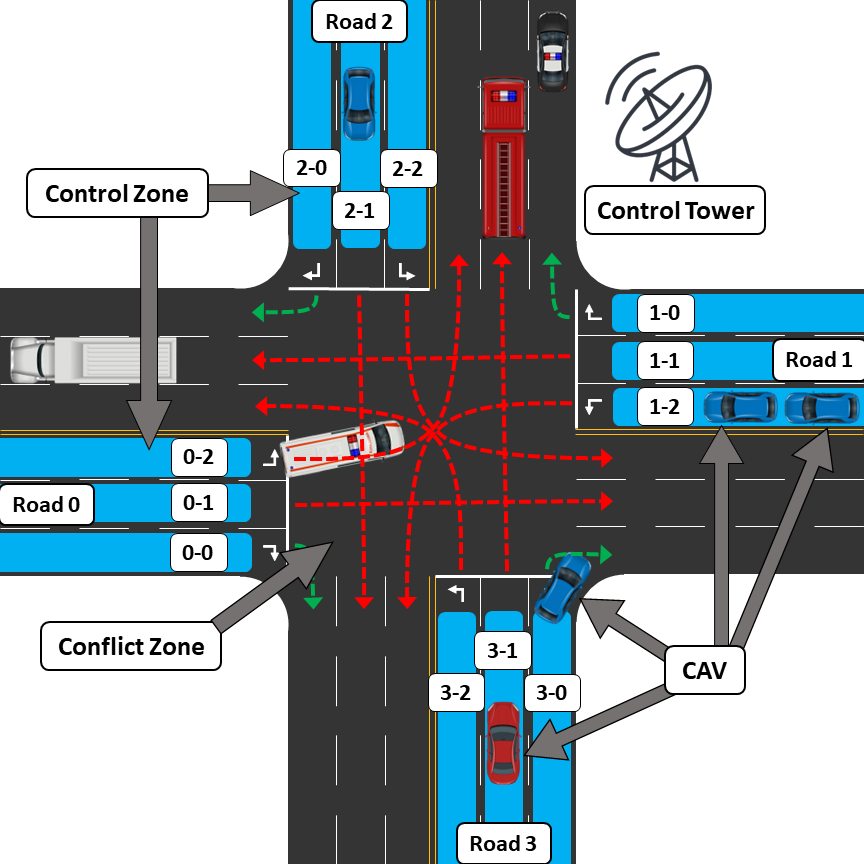}
        \caption{Multi-lane intersection structure with lane-based groups}
        \label{fig: conflict_groups}
\end{subfigure}
  \begin{subfigure}[h]{\columnwidth}
  \vspace{15pt}
    \centering
\resizebox{.6\columnwidth}{!}{
\begin{tabular}{cc}
\toprule[1.5pt]
{\textbf{Lane group}} & {\textbf{No-conflict group set}} \\ 
\midrule
0-1 & 0-2, 1-1, 2-2 \\
0-2 & 0-1, 1-2, 3-1 \\
1-1 & 0-1, 1-2, 3-2 \\
1-2 & 0-2, 1-1, 2-1 \\
2-1 & 1-2, 2-2, 3-1 \\
2-2 & 0-1, 2-1, 3-2 \\
3-1 & 0-2, 2-1, 3-2 \\
3-2 & 1-1, 2-2, 3-1 \\
\bottomrule[1.5pt]
\end{tabular}
}
    \caption{Non-conflicting trajectory groups at intersection}
    \label{table: grouping}
    \end{subfigure}
\label{fig: conflict_figure}
\caption{\textbf{Conflict handling at multi-lane intersections}. We propose a novel strategy for conflict handling using vehicle grouping based on non-colliding trajectories.}
\vspace{-15pt}
  \end{figure}

We observe that vehicle groups intending to make right turns at the intersection (groups 0-0, 1-0, 2-0, 3-0) do not create conflicts with any other groups. As a result, these groups are granted unrestricted entry into the intersection, and the optimization constraints outlined in \ref{sec:optimization} reflect this allowance. However, it is crucial for the remaining lane groups to determine which other groups can also enter the intersection simultaneously without causing conflicts. We refer to this collection as the "non-conflicting group set" for each corresponding main lane group, as demonstrated in Table \ref{table: grouping}. For instance, let's consider the main lane group 0-1. The groups with trajectories that do not clash with this main group include 0-2, 1-1, and 2-2, along with all the right turn groups (0-0, 1-0, 2-0, 3-0). Essentially, this implies that vehicles belonging to the non-conflicting group set of a main lane group can enter the intersection concurrently with a vehicle from the main lane group itself. It's important to note that the right turn at intersection groups (0-0, 1-0, 2-0, 3-0) are inherently considered non-conflicting and are automatically included in the non-conflicting groups list for all vehicle groups. The determination of non-conflicting groups is predicated on the assumption that vehicles approaching the conflict zone have already completed any necessary lane changes strategically.


\subsection{\optname: Computing Optimal Velocities} 
\label{sec:optimization}

\noindent\textbf{The Optimization Problem:}

Based on the priority sequence $q$ from Section~\ref{subsec: priority_order}, the optimal command velocities, $\bm{u}_i$, for each vehicle $i$ in the control zone are computed by optimizing Equation~\ref{eq:opt_velocity} for the fixed sequence $q$. The objective function restated:
\begin{equation}
\mathcal{J}(\bm{u}_i|q) = \min_{\{\bm{u}_i\}} \sum_{i=1}^n \lambda (\bm{u}_i-\bar v)^2 + (1-\lambda) (\bm{u}_i-v_i)^2
\label{eq:J}
\end{equation}

\noindent The first quadratic term of Equation~\ref{eq:J} prevents $\bm{u}_i$ from deviating from the speed limit  $\bar v$ (results in improved throughput). Thus each vehicle will attempt to maximize its velocity within limits. The second quadratic term minimizes the control effort applied to optimize fuel efficiency. This term minimizes the selection of control goals resulting in strong acceleration and deceleration tasks. The parameter $\lambda$ offers a trade off between these two competing terms and provides a method to tune and prioritize one term over the other. 

As described in section \ref{sec: control_var_dynamics}, when we are dealing with heterogeneous traffic it becomes increasingly important to treat individual vehicles differently depending on their needs and priorities. There are two main types of priorities we consider in this research. The first type is the Speed Priority ($P^s_i$), which represents each vehicles desire to maximize its individual velocity. The second type is the Speed Variation Priority ($P^v_i$), which represents each vehicles desire to minimize variations to its current speed, thereby minimizing acceleration tasks and overall fuel consumption. 
Based on the vehicle class $c_i$ and vehicle preference $d_i$, a priority assignment node computes these two types of priorities ($P^s_i, P^v_i$) for each vehicle $i$ in the control zone. Each vehicle depending on its class $c_i$ has upper and lower bounds on its priority values. The exact value assigned within these class bounds takes into account each vehicle's individual preference $d_i$. While the $\lambda$ parameter in Equation~(\ref{eq:J}) provides a way to trade off between speed maximization and fuel savings it only does so in a global sense. In order for each individual vehicle to adjust its preference towards speed maximization or fuel savings, we introduce the two priority terms ($P^s_i, P^v_i$) into the objective function resulting in the following updated objective function:
\begin{equation}
\mathcal{J}(\bm{u}_i|q) = \min_{\{\bm{u}_i\}} \sum_{i=1}^n \lambda P^s_i (\bm{u}_i-\bar v)^2 + (1-\lambda)  P^v_i (\bm{u}_i-v_i)^2
\label{eq:J2}
\end{equation}
Here, each individual vehicle $i$, can affect the bias towards either throughput or fuel savings depending on its own individual preferences captured in its two priority values. This allows for a more optimal solution which can cater to the individual needs of each vehicle.
Below we discuss constraints on safety and the influence of $q$ on Equation~(\ref{eq:J2}).\\

\noindent \textbf{Constraints on Safety and Compliance with $\mathbf{q}$:} 

We consider both longitudinal (rear-end) collisions in the control zone and collisions in the conflict zone. Rear-end collisions along each of the lanes in the control zone are prevented by ensuring all vehicles in the same lane maintain a safe distance between each other. Formally,

\begin{equation} 
\Big | s_j^k(t+1)-s_{j\prime}^k(t+1)\Big | \geq l_{j} + M_{sr}
\label{eq: longitudinal_collision}
\end{equation}

\noindent for all $k$ lanes in the control zone and for all agents $j,j\prime$ with $j\neq j\prime$ in the $k^\textrm{th}$ lane. $l_j$ represents the length of the agent $j$ and $M_{sr}$ is a safety margin added to maintain a safe gap between vehicles and to account for imperfections in sensing and actuation. We compute the future position $s_i(t+1)$ using,

\begin{equation}
\label{eq:eq_cons_3_1}
    s_i(t+1) = s_i(t) - \Delta t \left (\frac{v_i(t)+\bm{u}_i(t)}{2}\right)
\end{equation}

\noindent Here, $\Delta t$ is defined as the planning time step of the controller.

Next, we also need to ensure that no collisions occur inside the conflict zone. This task reduces to ensuring no vehicles belonging to conflicting lane groups (\ref{subsec: conflict_resolution}) enter the intersection simultaneously. Therefore, for each vehicle $i$ in the sequence $q$, paired with a vehicle $j$ in a conflicting group where $i$ has higher priority over $j$, we add the constraint,

\begin{equation}
        t_c^i + \frac{(l_i + M_{sl})}{\bm{u}_i} \leq t_c^{j}
    \label{eq: lateral_collision}
\end{equation}

\noindent where $t_c^i= \frac{s_i(t+1)}{\bm{u}_i}$ represents the time at which $i$ would arrive at the intersection. The above constraint ensures that command velocities $\bm{u}_i$ are chosen such that vehicles proceed through the conflict zone in the order of the sequence under consideration $q$. Note that this constraint is applied to every conflicting pair of vehicles; non-conflicting vehicles are allowed to enter the intersection at the same time, increasing the efficiency of the system. The safety margin $M_{sl}$ corresponds to the width of the intersection and prevents lateral collisions occurring when conflicting vehicles cross the intersection. 

The safety constraints with respect to ordering $q$ in Equations~(\ref{eq: longitudinal_collision}) and~(\ref{eq: lateral_collision}) result in an intractable mixed-integer quadratic programming (MIQP) optimization problem. This problem is further compounded due to lane indexing, which introduces a new variable indicating the agent's lane. We remove the dependency over the lane index by separating the constraints for each lane and using the multi-lane conflict resolution scheme introduced in~\ref{subsec: conflict_resolution}. This results in simplification of Equations~(\ref{eq: longitudinal_collision}) and~(\ref{eq: lateral_collision}), reformulating the original MIQP as a quadratic programming (QP) optimization problem; this QP can be solved in real-time. More formally, Equation~(\ref{eq: longitudinal_collision}) can be simplified as,

\begin{equation}
\label{eq: long_linear}
    \begin{aligned}
        \bm{u}_{j}^k-\bm{u}_{j+1}^k \geq & \left(v_{j+1}^k-v_{j}^k\right) \\
        & + \frac{2}{\Delta t}\left(s_{j}^k-s_{j+1}^k+l_{j}+M_{sr}\right) 
    \end{aligned}
\end{equation}

\noindent for all $k$ lanes in the control zone and for all $j,j+1$ consecutive vehicles belonging to lane $k$. And Equation~(\ref{eq: lateral_collision}) can be simplified as,

\begin{equation}
\label{eq: lateral_linear}
        \bm{u}_{j}\left(s_i-\frac{\Delta t}{2}.v_i+l_i+M_{sl}\right) 
        \leq \bm{u}_i\left(s_{j}-\frac{\Delta t}{2}.v_{j}\right) 
\end{equation}
for all vehicles $i$ in sequence $q$ and for all vehicles $j$ in a conflicting group to vehicle $i$.

Finally, we enforce reachability by bounding the command velocities by the speed limit, $\bar v$, as well as the acceleration and breaking capabilities of each individual vehicle.

\begin{equation}
\label{eq:eq_cons_1}
    0\leq \bm{u}_i(t)\leq \bar v
\end{equation}
\begin{equation}
\label{eq:eq_cons_2}
    a^{min}_i \Delta t\leq \bm{u}_i(t)-v_i(t)\leq a^{max}_i \Delta t
\end{equation}

\noindent\textbf{Solving \optname:} \textit{The constraints (\ref{eq: long_linear}), (\ref{eq: lateral_linear}), (\ref{eq:eq_cons_1}), and (\ref{eq:eq_cons_2}) are provided to \optname with objective function given by Equation~(\ref{eq:J2}), which is solved numerically using Gurobi (version 9.1.2) \cite{gurobi}.}

The output of this objective function includes the computed target $\bm{u}_i$ command velocities for each CAV $i$ and the minimum objective value achieved for the sequence tested $q$. This information is saved for evaluation in the next step.

\noindent \textbf{Obtaining optimal sequence $q^*$ and command velocities $\bm{u}_i^*$}

Once all the optimization processes for each merging sequence $q$, generated in section \ref{subsec: priority_order} are completed, we analyze their output data to determine the optimal sequence $q^*$ and command velocities $\bm{u}_i^*$. We compare the objective values achieved for each tested sequence $q$, and the sequence providing the minimum objective value is deemed to be the optimal sequence $q^*$. The corresponding command velocities $\bm{u}_i$ are then deemed to be the optimal target command velocities $\bm{u}_i^*$. 

The optimal target $\bm{u}_i^*$ command velocities are then transmitted to each of the CAVs in the control zone. Note that we do not consider delays in communication and assume data is transmitted instantaneously via V2I communication. The low-level controller on-board each CAV then computes the acceleration or deceleration needed, to achieve the desired $\bm{u}_i$ command velocity within the control time ($\Delta t$) duration. 

Finally, a key feature of our approach is that it is robust to drift in the executed control. Our approach operates at a cycle frequency of $100$ ms after which new command velocities are obtained, updating the previous controls. Therefore, any deviation between optimal control and the executed control is erased during every $100$ ms update cycle.

\section{Experiments and Results}
\label{sec: experiments_and_results}

In this section, we evaluate the capabilities of our approach, study the impact of varying environmental parameters, and compare against other real-time methods.
\subsection{Experiment Setup}
We evaluate the performance of \modelNew in a multi-lane four-arm intersection simulation using the SUMO platform~\cite{SUMO2018} with the following parameters; \textit{Control zone length ($L_c$) = $150$m}, \textit{Speed limit ($\bar v$) = $20$m/s} \textit{Objective trade-off ($\lambda$) = $0.7$}, \textit{Safety margins: $M_{sr}$ = $2.0$ and $M_{sl}$ = $25.0$}. The high-level controller uses the TraCI traffic controller interface to communicate with the simulation. All simulations and optimization algorithms run on a personal computer with an Intel i7-8750H CPU and 32GB of RAM.
\vspace{-5pt}
\subsection{Evaluation Metrics and Baselines}
The key performance metrics used to evaluate control algorithms are \emph{throughput} (number of vehicles that can travel through the intersection per min), \emph{time-to-goal} (average time vehicles spend in the control zone) and \emph{fuel consumption}. We also add the metric of \emph{CO2 consumption} as carbon emissions are an important criterion in overall vehicle emissions. Furthermore, to showcase the performance in the presence of heterogeneous vehicles, we include separate results on \emph{time-to-goal of EVs} (Emergency Vehicles) and \emph{fuel consumption of trucks}. 
To showcase the properties and appropriately assess the performance
of \modelNew, we compare it with three other real-time capable baseline approaches using the same inputs.  
The first is an auction framework where the bidding strategy is based on the order in which agents arrive at the intersection, similar to the FIFO principle. To test the limits of our approach, we also compare against well-tuned traffic lights (light timing set to maximize performance), a well-established signaled intersection traffic management system. Additionally, we compare against another baseline which is a gap-based adaptive traffic light control strategy. This adaptive strategy allows traffic lights to dynamically adapt to current traffic conditions in the incoming lanes of an intersection by prolonging traffic phases whenever a continuous stream of traffic is detected. 
We refer to these baselines as \textsc{Stop-sign}, \textsc{Traffic-Light}, and \textsc{Actuated-Traffic-Light}. We refer to our method as \textsc{Coop-control}. Note that the Krauss vehicle following model was used in these comparisons.

\subsection{Performance analysis of \modelNew}

\begin{figure}[ht!]
\centering
   \begin{subfigure}[ht]{0.49\textwidth}
    \includegraphics[width=\textwidth]{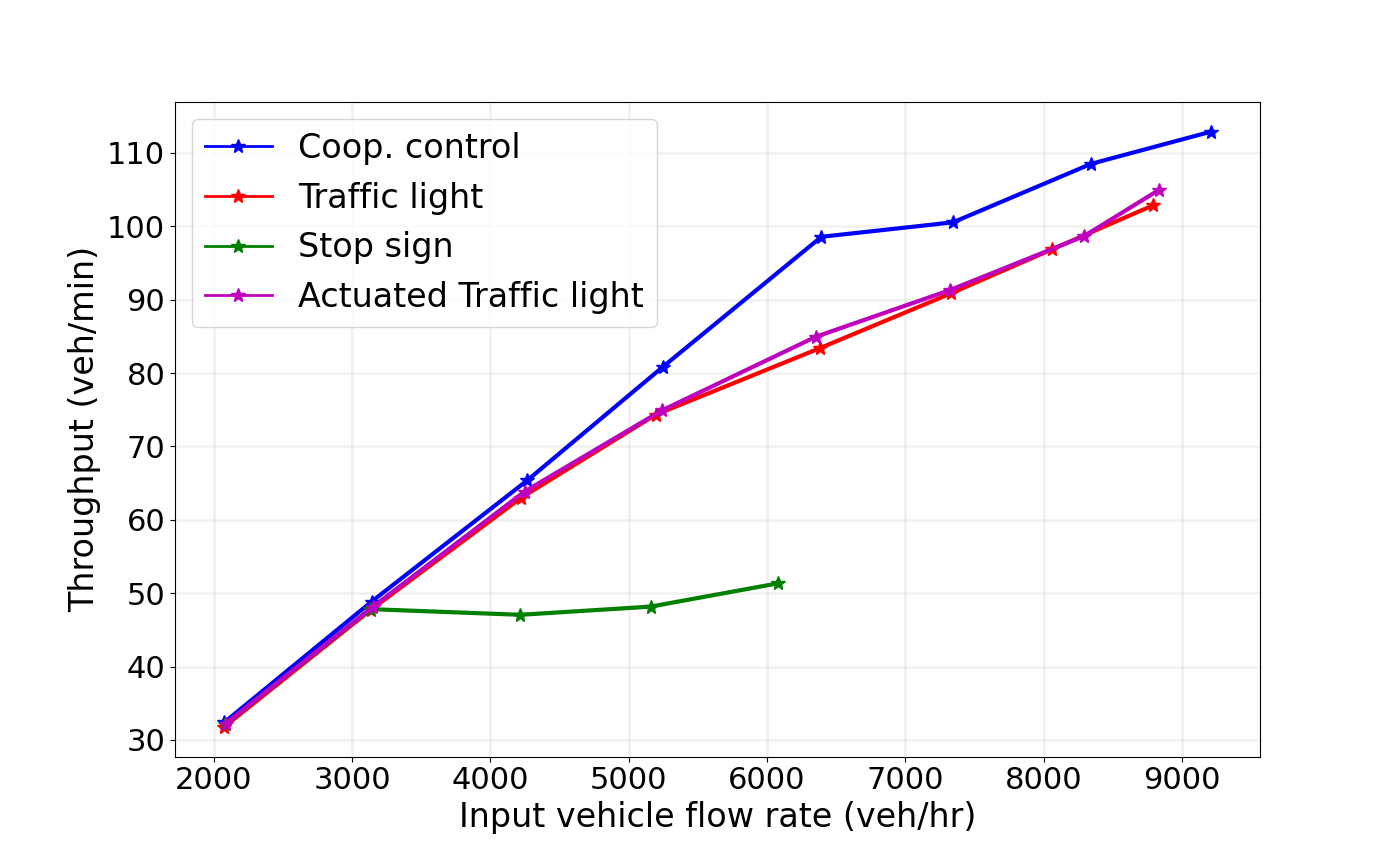}
    \caption{Throughput comparison}
    \label{fig: Throughput_go v2}
  \end{subfigure}
 \begin{subfigure}[ht]{0.49\textwidth}
    \includegraphics[width=\textwidth]{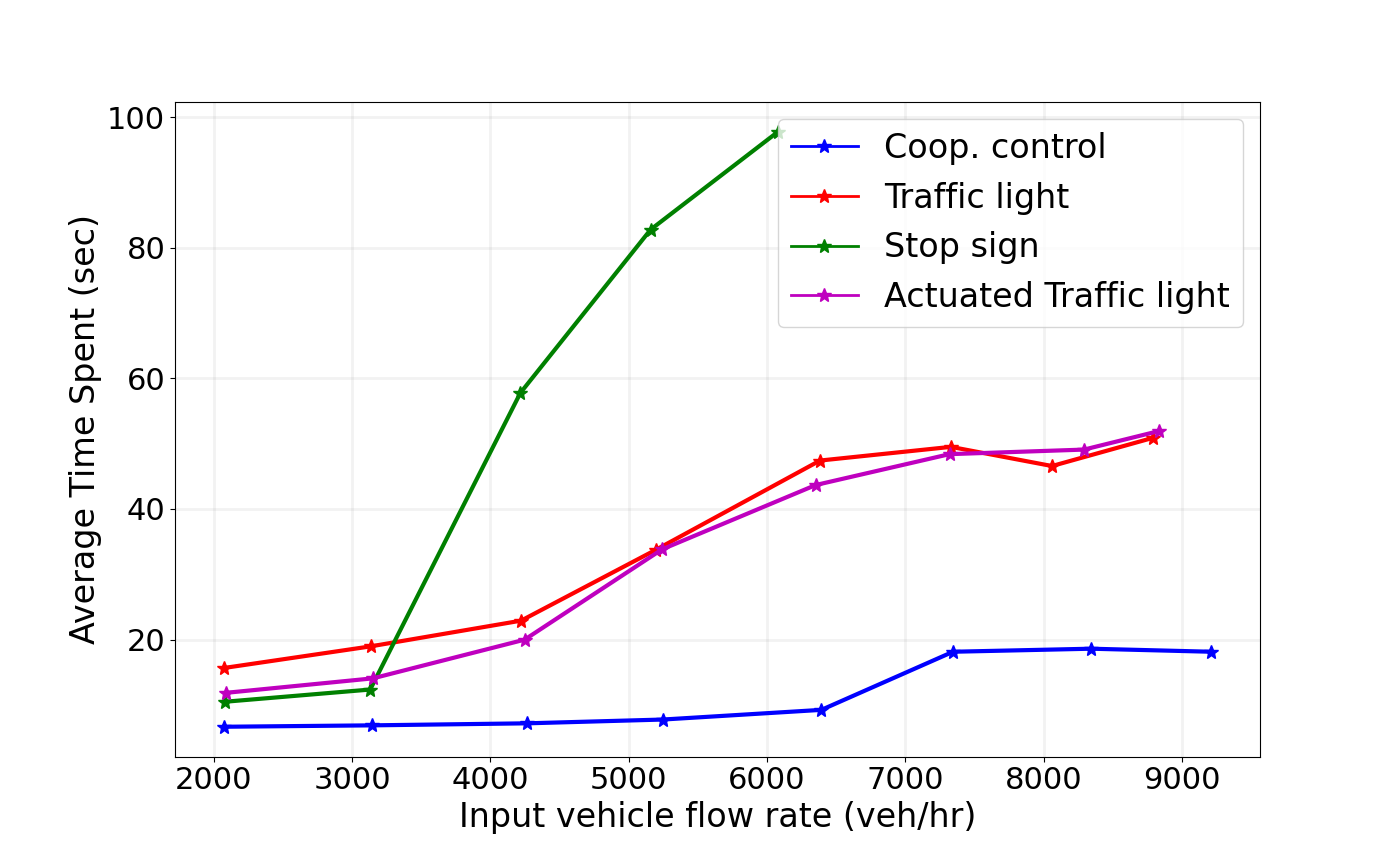}
    \caption{Time to goal comparison}
    \label{fig: TTG_go v2}
  \end{subfigure}
\caption{Performance in terms of throughput and time-to-goal in comparison to baselines, with increasing vehicle input flow rate.}
  \label{fig: throughput and time_spent v2} 
\end{figure} 

The results of these experiments for the two main traffic performance metrics of throughput and time spent are shown in figure \ref{fig: throughput and time_spent v2}. It is observed that our method (\textsc{Coop-control}) significantly improves performance relative to the traffic light-based methods with the relative improvement in performance increasing with increasing input flow rate. This follows our intuition that advanced multi-agent control methods provide a significant advantage when congestion is high. At low congestion levels, even simpler methods are often sufficient. The \textsc{Stop-sign} method's performance however is very poor even at moderately high input flow rates. In fact beyond a certain vehicle flow rate, due to the low throughput in the \textsc{Stop-sign} method, the input lanes become fully packed and it is impossible to increase the input flow rate any further. Our proposed \textsc{Coop-control} method is observed to provide improvements to \emph{throughput} of around $25\%$ over the \textsc{Traffic-Light} and \textsc{Actuated-Traffic-Light} methods and around $137\%$ improvement over the \textsc{Stop-sign} method as shown in figure \ref{fig: Throughput_go v2}. Additionally, as shown in figure \ref{fig: TTG_go v2}, our \textsc{Coop-control} method improves the average time spent by vehicles (\emph{time-to-goal}) by around $70\%$ over the \textsc{Traffic-Light} and \textsc{Actuated-Traffic-Light} methods and around $83\%$ over the \textsc{Stop-sign} method. 

Here, we find that both the timed \textsc{Traffic-Light} and \textsc{Actuated-Traffic-Light} methods offer similar performance which shows how well the timing on the \textsc{Traffic-Light} method has been tuned to match the rate of vehicle inflow on each of the intersections input arms. 

\begin{figure}[ht!]
\centering
   \begin{subfigure}[ht]{0.49\textwidth}
    \includegraphics[width=\textwidth]{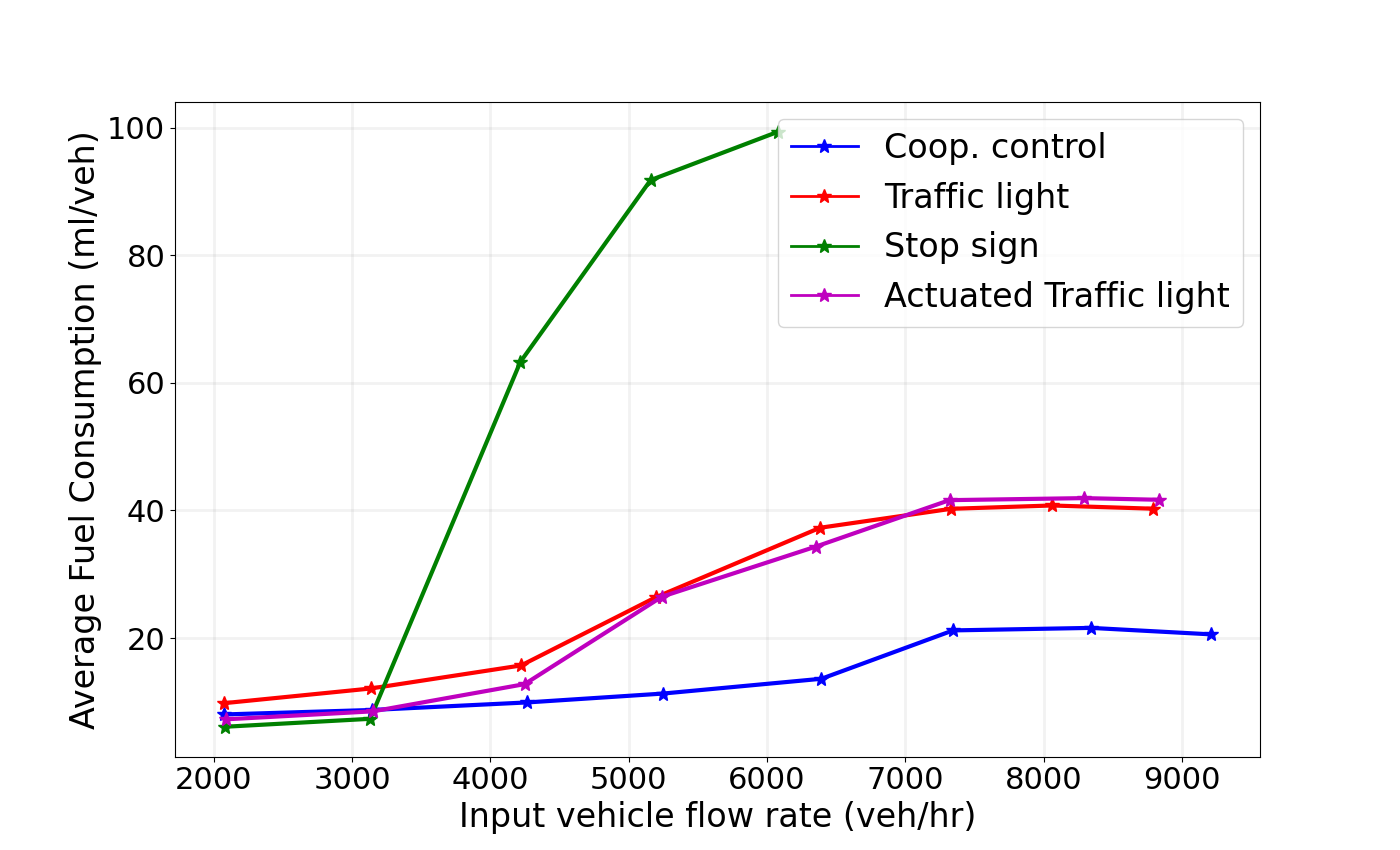}
    \caption{Fuel Consumption comparison}
    \label{fig: fuel_go v2}
  \end{subfigure}
 \begin{subfigure}[ht]{0.49\textwidth}
    \includegraphics[width=\textwidth]{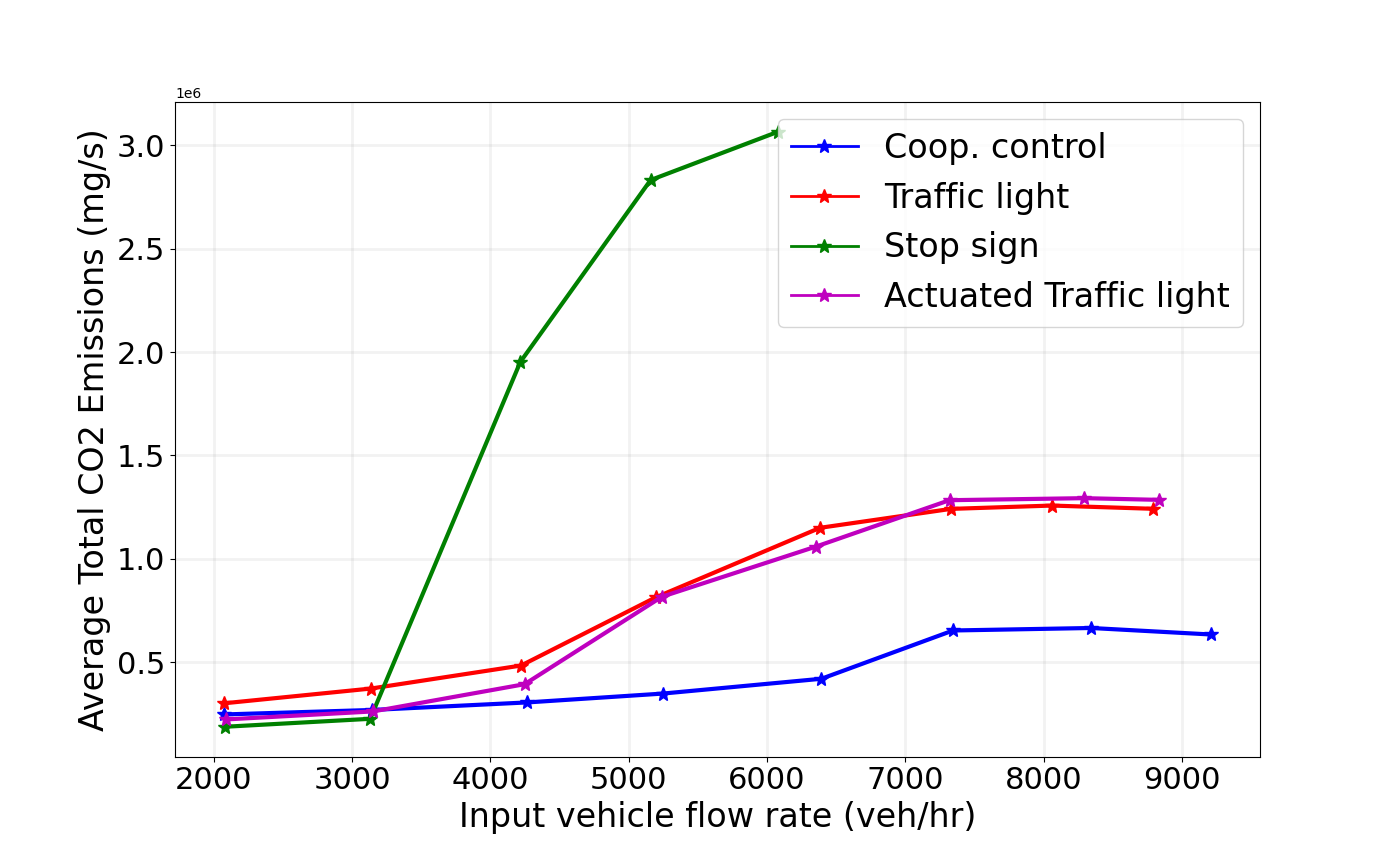}
    \caption{CO2 emissions comparison}
    \label{fig: CO2_go v2}
  \end{subfigure}
\caption{Performance in terms of fuel consumption and CO2 emissions in comparison to baselines, with increasing vehicle input flow rate.}
  \label{fig: fuel and CO2 v2} 
\end{figure}  

The key indicators we consider for efficiency in terms of environmental factors are total fuel consumption and CO2 emissions. The performance of our \textsc{Coop-control} method for these metrics is shown in figure \ref{fig: fuel and CO2 v2}, where we observe significant improvements over baseline methods. Once again we observe that the level of improvement increases at higher input vehicle flow rates. 
In figure \ref{fig: fuel_go v2}, we observe a $50\%$ and $80\%$ reduction in fuel consumption relative to the \textsc{Traffic-Light} and \textsc{Stop-sign} methods respectively. The same trend is observed for CO2 emissions in figure \ref{fig: CO2_go v2}, where we see a reduction of $50\%$ and $80\%$ relative to the \textsc{Traffic-Light} and \textsc{Stop-sign} methods respectively. This ability of our method to reduce fuel usage and CO2 emissions by such significant margins could have a major impact on overall environmental sustainability. In our \textsc{Coop-control} method fewer vehicles are forced to unnecessarily slow down and speed up which equates to less fuel usage to cover the same distance. Additionally, when acceleration is necessary the control algorithm attempts to minimize the amount of acceleration required.

\begin{figure}[ht!]
\centering
   \begin{subfigure}[ht]{0.49\textwidth}
    \includegraphics[width=\textwidth]{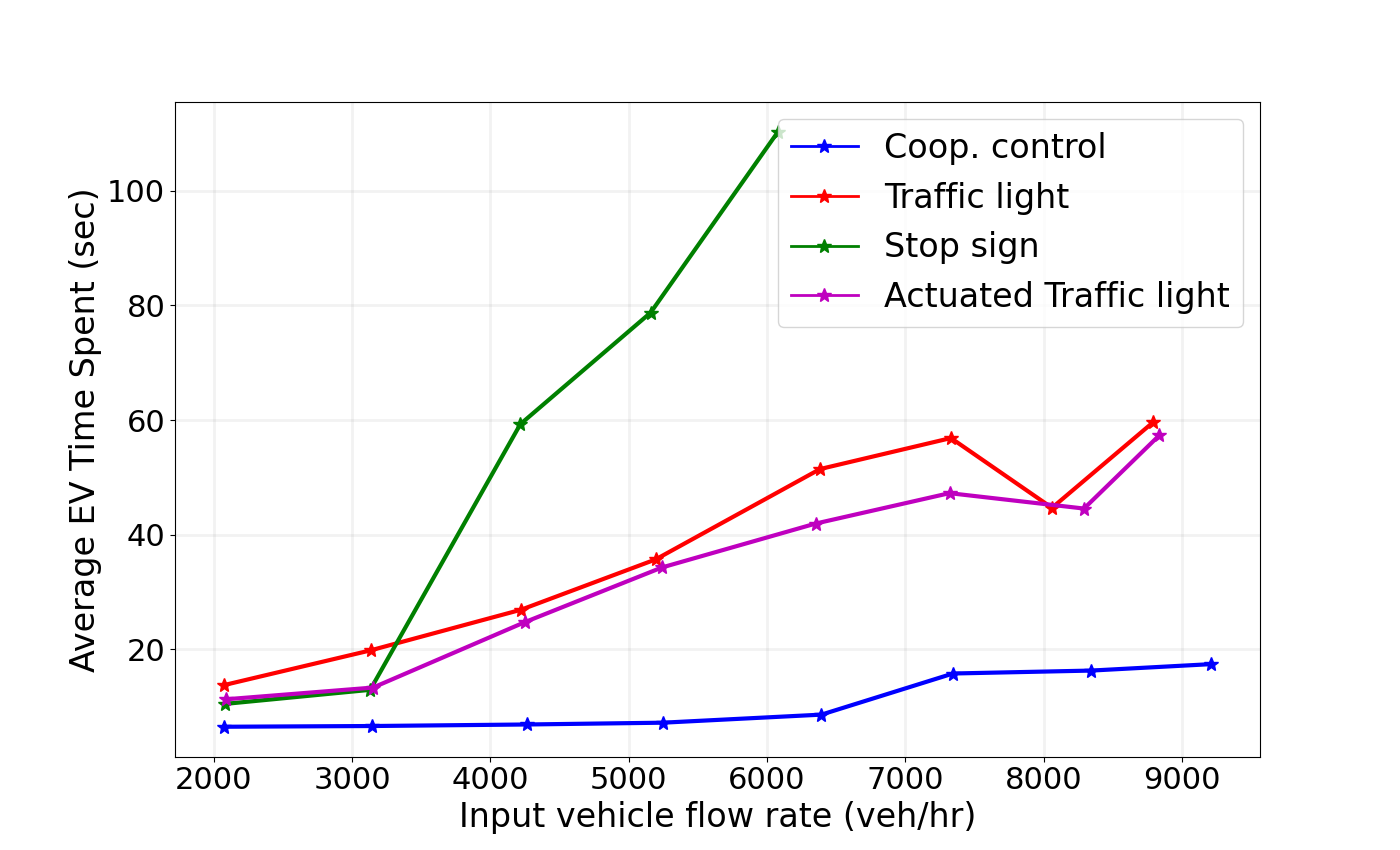}
    \caption{EV Time to goal comparison}
    \label{fig: EV_TTG_go v2}
  \end{subfigure}
 \begin{subfigure}[ht]{0.49\textwidth}
    \includegraphics[width=\textwidth]{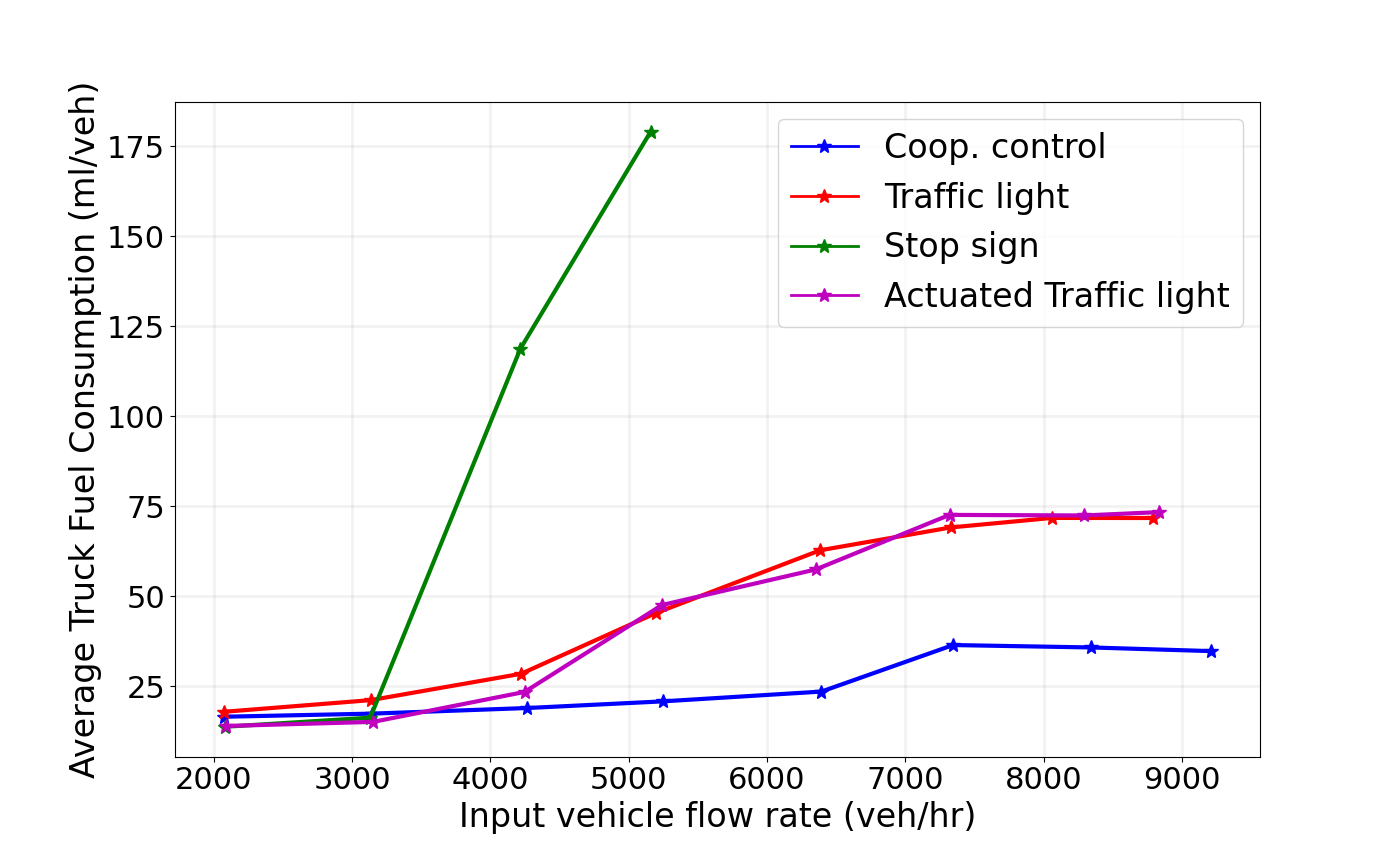}
    \caption{Truck Fuel Consumption comparison}
    \label{fig: truck_fuel_go v2}
  \end{subfigure}
\caption{Performance in terms of emergency vehicle time-to-goal and truck fuel consumption in comparison to baselines, with increasing vehicle input flow rate.}
  \label{fig: EV TTG and Truck fuel v2} 
\end{figure}  

To showcase our method's ability to handle heterogeneous traffic we also show a comparison of the performance impact for the two major edge cases of emergency vehicles minimizing their \emph{time-to-goal of EVs} and the minimizing of \emph{fuel consumption of trucks}. In a system where each vehicle's actions can affect the other neighboring vehicles, these two goals of minimizing time and fuel consumption are generally opposing each other. However as shown in figure \ref{fig: EV TTG and Truck fuel v2}, our method can successfully minimize both metrics in order to satisfy the individual needs of each vehicle class. Over the standard benchmark of the \textsc{Traffic-Light} method, our \textsc{Coop-control} algorithm is shown to provide reductions of around $75\%$ and $54\%$ for \emph{time-to-goal of EVs} and \emph{fuel consumption of trucks} respectively. When compared to figures \ref{fig: TTG_go v2} and \ref{fig: fuel_go v2}, we see that EVs face fewer relative delays than the rest of the traffic and trucks relatively consume less fuel than the rest of the traffic. 

\subsubsection{Impact of unbalanced traffic inflow on \modelNew}
A major benefit of our method is its ability to adapt to dynamic changes to the input vehicle flow rate on each of the input arms of the intersection. When all the arms have equal input flow the system is symmetrical and we consider the input flow to be balanced. However, when the input flow rate on one or more of the arms is noticeably higher than the others, we consider the input flow to be unbalanced. To evaluate our method's robustness to unbalanced traffic inflow we conduct experiments where the amount of traffic originating in the horizontal lanes (from West to East or vice versa) is gradually increased relative to the traffic originating in the vertical lanes (from North to South or vice versa). We then plot the same performance metrics against the increasing ratio of horizontal lane to vertical lane traffic. Note that in these experiments, to allow for accurate comparisons, the flow rates on each arm are adjusted so that the overall total input flow rate remains fixed at $5200~veh/hr$.

\begin{figure}[ht!]
\centering
   \begin{subfigure}[ht]{0.49\textwidth}
    \includegraphics[width=\textwidth]{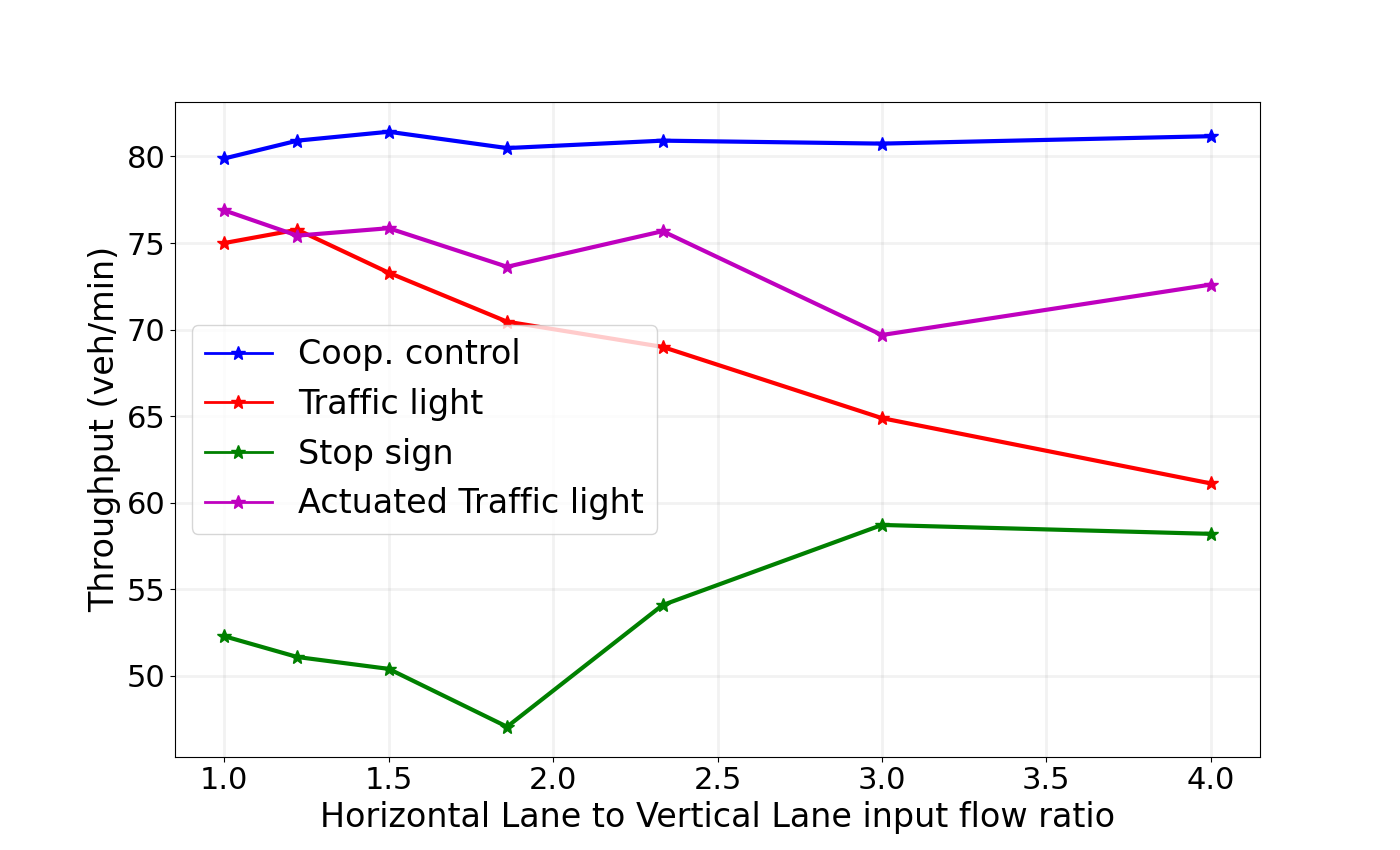}
    \caption{Throughput comparison with unbalanced inflow}
    \label{fig: Throughput_go_un v2}
  \end{subfigure}
 \begin{subfigure}[ht]{0.49\textwidth}
    \includegraphics[width=\textwidth]{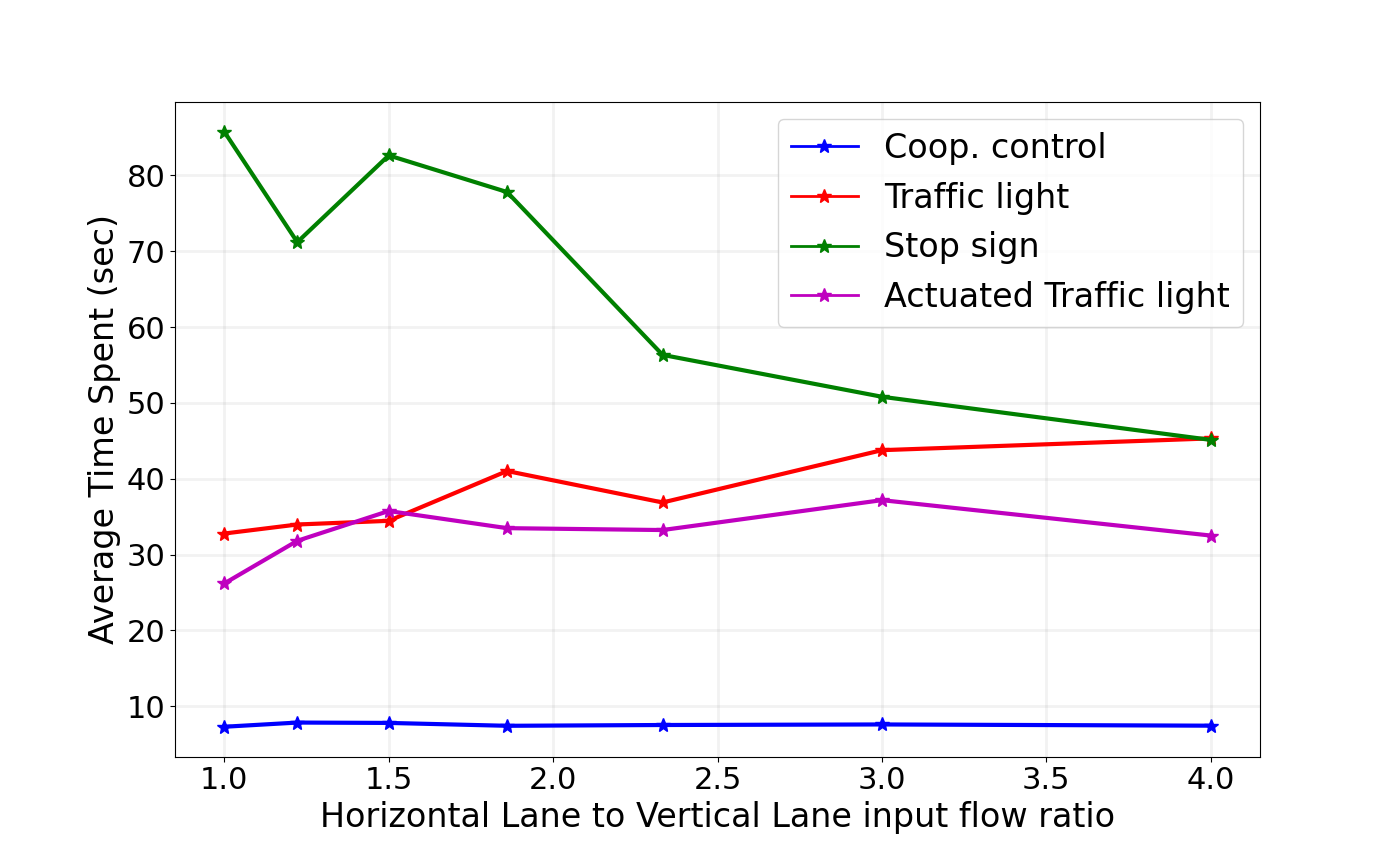}
    \caption{Time to goal comparison with unbalanced inflow}
    \label{fig: TTG_go_un v2}
  \end{subfigure}
\caption{Performance in terms of throughput and time-to-goal in comparison to baselines, with unbalanced vehicle input flow rate. Total flow rate fixed at $5200~veh/hr$.}
  \label{fig: throughput and time_spent_un v2} 
\end{figure}

The effect of unbalanced traffic inflow on throughput and time spent is presented in figure \ref{fig: throughput and time_spent_un v2}. Our proposed \textsc{Coop-control} method is unperturbed by changes in the traffic inflow balance. Both the throughput and time spent remain steady under increasingly unbalanced input flow when our proposed \textsc{Coop-control} method is used. The timed \textsc{Traffic-Light} approach is the most affected method, with performance dropping by about $20\%$ and $45\%$ in throughput and time spent when the balance between horizontal and vertical lanes changes from $1:1$ to $4:1$. This is expected since the \textsc{Traffic-Light} method timing is tuned to handle a symmetric load of traffic. As such, in symmetric inflow conditions ($1:1$) the \textsc{Traffic-Light} performs equivalent to the \textsc{Actuated-Traffic-Light} method which is designed to adapt to possible changes in traffic symmetry. Therefore we see a lesser performance drop of around $5\%$ in throughput in the \textsc{Actuated-Traffic-Light} method when the traffic becomes more unbalanced. While the \textsc{Stop-sign} method generally performs much worse than its competitors, we observe an interesting phenomenon of improved performance when the traffic flow becomes increasingly unbalanced. The \textsc{Stop-sign} method is inherently designed to handle unbalanced traffic as and when required. When the traffic is more unbalanced, then the traffic moving through the intersection is less diverse in terms of possible directions it takes. As such the vehicles moving in the majority-use directions can pass the intersection with less waiting especially when they belong to non-conflicting groups. This is the primary reason the stop sign method performs better when traffic is less balanced with around a $10\%$ improvement in throughput in more unbalanced traffic. 

\begin{figure}[ht!]
\centering
   \begin{subfigure}[ht]{0.49\textwidth}
    \includegraphics[width=\textwidth]{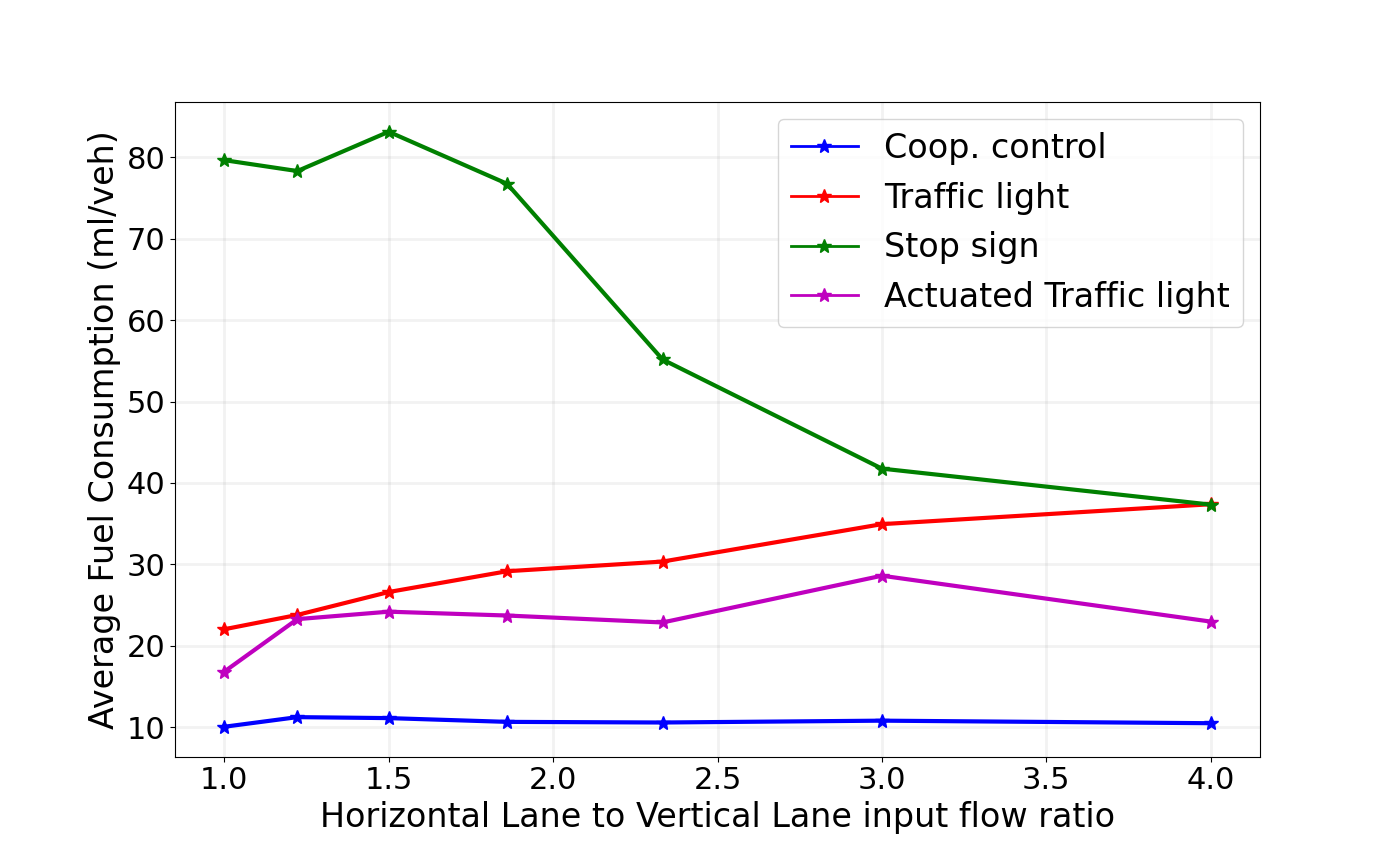}
    \caption{Fuel Consumption comparison with unbalanced inflow}
    \label{fig: fuel_go_un v2}
  \end{subfigure}
 \begin{subfigure}[ht]{0.49\textwidth}
    \includegraphics[width=\textwidth]{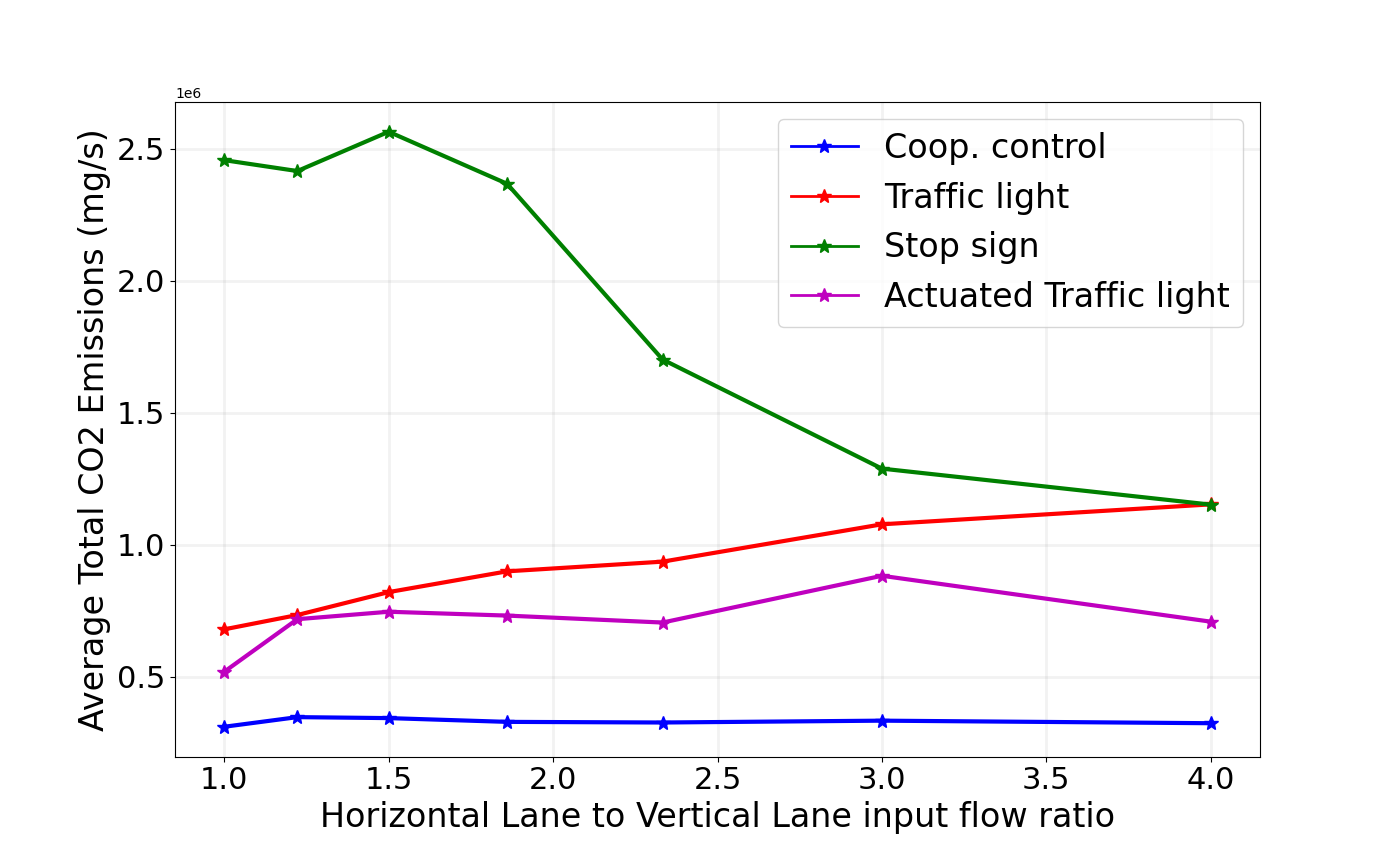}
    \caption{CO2 emissions comparison with unbalanced inflow}
    \label{fig: CO2_go_un v2}
  \end{subfigure}
\caption{Performance in terms of fuel consumption and CO2 emissions in comparison to baselines, with unbalanced vehicle input flow rate. Total flow rate fixed at $5200~veh/hr$.}
  \label{fig: fuel and CO2_un v2} 
\end{figure} 

The same observations hold in terms of fuel consumption and CO2 emissions as shown in figure \ref{fig: fuel and CO2_un v2}. Our \textsc{Coop-control} method which offers the best performance also maintains this performance irrespective of the balance of traffic flow. The \textsc{Traffic-Light} shows the highest performance degradation while the \textsc{Actuated-Traffic-Light} method lies somewhat in between. As in the earlier case the \textsc{Stop-sign} method performance improves in unbalanced traffic and is actually capable of achieving fuel consumption related performance equivalent to the \textsc{Traffic-Light} method in the extreme unbalanced traffic condition. 

\begin{figure}[ht!]
\centering
   \begin{subfigure}[ht]{0.49\textwidth}
    \includegraphics[width=\textwidth]{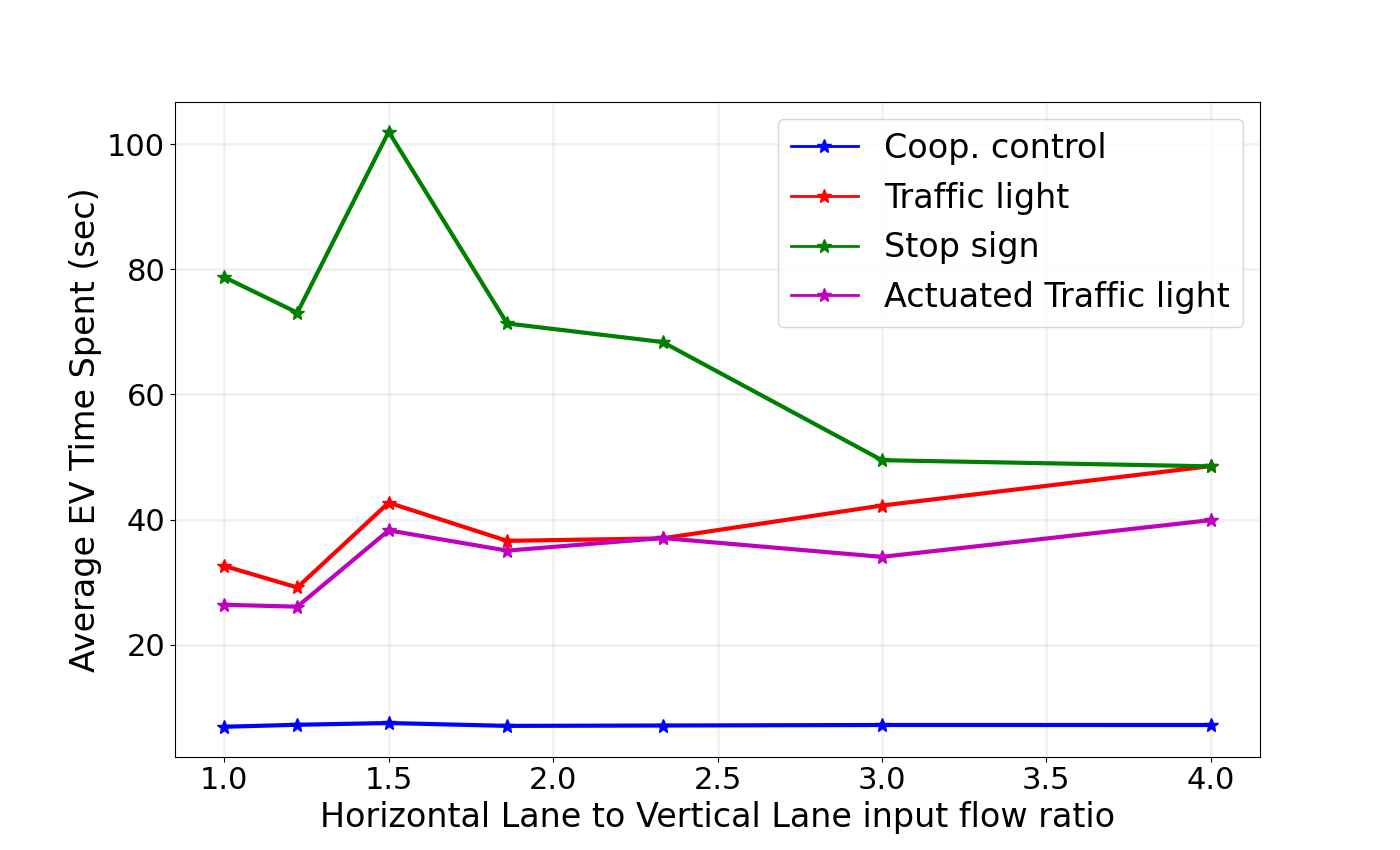}
    \caption{EV Time to goal comparison with unbalanced inflow}
    \label{fig: EV_TTG_go_un v2}
  \end{subfigure}
 \begin{subfigure}[ht]{0.49\textwidth}
    \includegraphics[width=\textwidth]{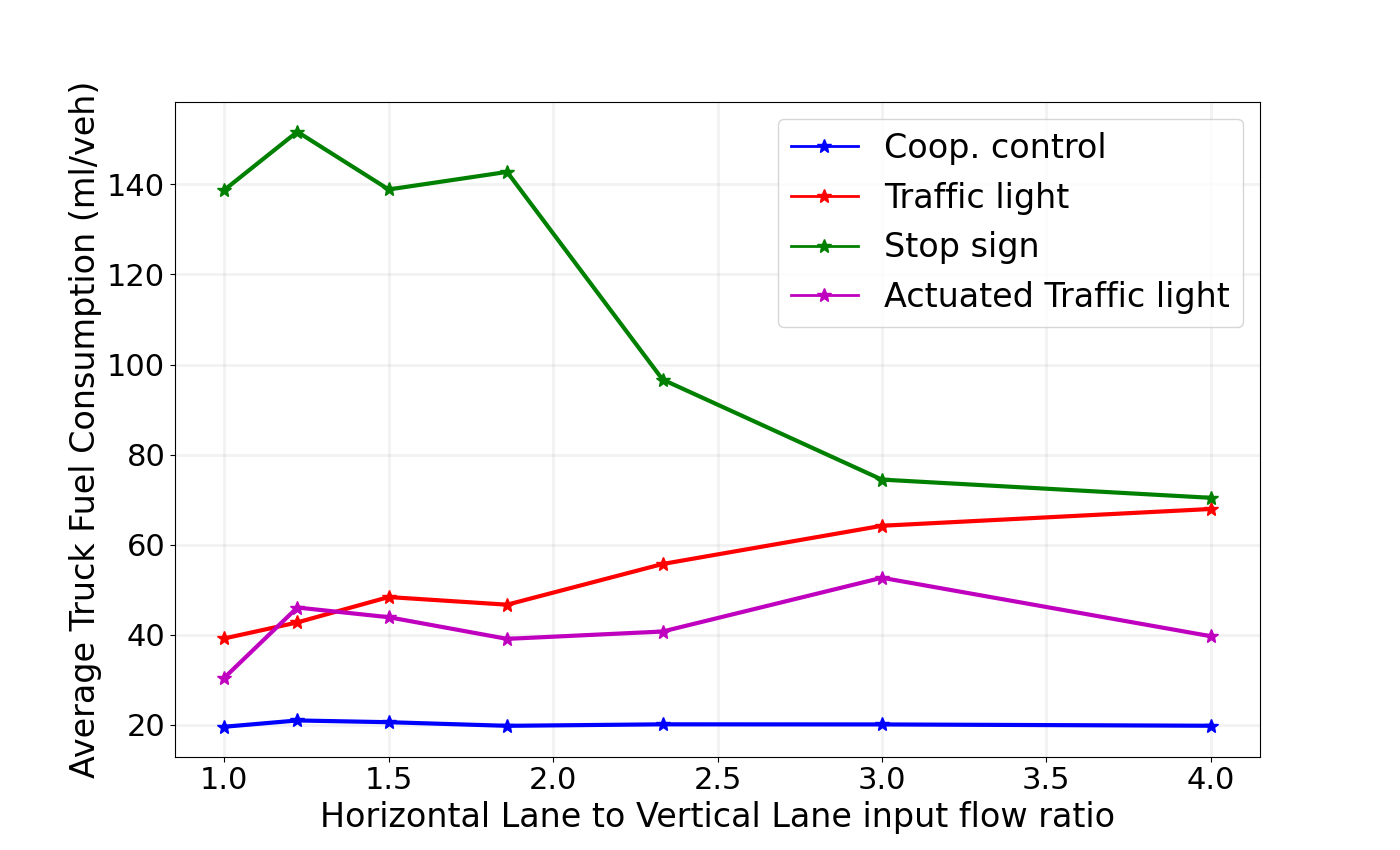}
    \caption{Truck Fuel Consumption comparison with unbalanced inflow}
    \label{fig: truck_fuel_go_un v2}
  \end{subfigure}
\caption{Performance in terms of emergency vehicle time-to-goal and truck fuel consumption in comparison to baselines, with unbalanced vehicle input flow rate. Total flow rate fixed at $5200~veh/hr$.}
  \label{fig: EV TTG and Truck fuel_un v2} 
\end{figure}  

Finally, as shown in figure \ref{fig: EV TTG and Truck fuel_un v2}, we find that these same observed trends hold in the specific heterogeneous vehicle cases of emergency vehicle time to goal and truck fuel consumption. 

Overall we identify that our \textsc{Coop-control} method demonstrates high levels of robustness to changes in the balance of input traffic flow. It not only exhibits the highest performance among all other competing methods but is also able to maintain this level of performance irrespective of the level of symmetry in the input traffic flow.
\section{Conclusion, Limitations, and Future Work}
\label{sec: conclusion}

We introduce \modelNew, our innovative hybrid solution for cooperative control at dynamic, multi-lane, unsignalized intersections. Our algorithm combines an auction mechanism for generating a priority entrance sequence with an optimization-based trajectory planner to determine optimal velocity commands aligning with the sequence. This unique approach enables real-time computation capabilities in high-density multi-lane traffic while ensuring efficiency, safety, and fairness. Through extensive testing on the SUMO platform, we validate that \modelNew enhances throughput by approximately $25\%$, reduces time spent by $70\%$, and decreases fuel consumption by $50\%$ compared to auction-based and signaled approaches involving traffic lights and stop signs. Additionally, unlike the traffic light baselines which had performance drops, \modelNew is unaffected by unbalanced traffic flow environments. Notably, our results demonstrate that \modelNew operates at remarkable real-time speeds ($<10$ milliseconds), surpassing prior methods by at least $100\times$.


There are several avenues for further research in this domain. One area of focus would involve investigating various types of auctions to enhance flexibility in managing intersection control. Different priority orderings of vehicles can be more efficient for different scenarios. Additionally, it would be valuable to explore the effects of imperfect communication on the algorithm's performance. Real-world systems encounter disruptions and unstable communication lines, so replicating this would help in creating a more robust solution. Furthermore, reproducing the information processing capabilities and delays of the sensors onboard the vehicles would generate a more realistic simulation. 


{\footnotesize \bibliography{refs}}

\begin{thebibliography}{10}
\providecommand{\url}[1]{#1}
\csname url@samestyle\endcsname
\providecommand{\newblock}{\relax}
\providecommand{\bibinfo}[2]{#2}
\providecommand{\BIBentrySTDinterwordspacing}{\spaceskip=0pt\relax}
\providecommand{\BIBentryALTinterwordstretchfactor}{4}
\providecommand{\BIBentryALTinterwordspacing}{\spaceskip=\fontdimen2\font plus
\BIBentryALTinterwordstretchfactor\fontdimen3\font minus \fontdimen4\font\relax}
\providecommand{\BIBforeignlanguage}[2]{{%
\expandafter\ifx\csname l@#1\endcsname\relax
\typeout{** WARNING: IEEEtran.bst: No hyphenation pattern has been}%
\typeout{** loaded for the language `#1'. Using the pattern for}%
\typeout{** the default language instead.}%
\else
\language=\csname l@#1\endcsname
\fi
#2}}
\providecommand{\BIBdecl}{\relax}
\BIBdecl

\bibitem{yoro2020co2}
K.~O. Yoro and M.~O. Daramola, ``Co2 emission sources, greenhouse gases, and the global warming effect,'' in \emph{Advances in carbon capture}.\hskip 1em plus 0.5em minus 0.4em\relax Elsevier, 2020, pp. 3--28.

\bibitem{mikhaylov2020global}
A.~Mikhaylov, N.~Moiseev, K.~Aleshin, and T.~Burkhardt, ``Global climate change and greenhouse effect,'' \emph{Entrepreneurship and Sustainability Issues}, vol.~7, no.~4, p. 2897, 2020.

\bibitem{kabeyi2022sustainable}
M.~J.~B. Kabeyi and O.~A. Olanrewaju, ``Sustainable energy transition for renewable and low carbon grid electricity generation and supply,'' \emph{Frontiers in Energy research}, vol.~9, 2022.

\bibitem{romero2024strategies}
C.~A. Romero, P.~Correa, E.~A. Ariza~Echeverri, and D.~Vergara, ``Strategies for reducing automobile fuel consumption,'' \emph{Applied Sciences}, vol.~14, no.~2, p. 910, 2024.

\bibitem{chandra2019densepeds}
R.~Chandra, U.~Bhattacharya, A.~Bera, and D.~Manocha, ``Densepeds: Pedestrian tracking in dense crowds using front-rvo and sparse features,'' in \emph{2019 IEEE/RSJ International Conference on Intelligent Robots and Systems (IROS)}.\hskip 1em plus 0.5em minus 0.4em\relax IEEE, 2019, pp. 468--475.

\bibitem{chandra2019roadtrack}
R.~Chandra, U.~Bhattacharya, T.~Randhavane, A.~Bera, and D.~Manocha, ``Roadtrack: Tracking road agents in dense and heterogeneous environments,'' 2019.

\bibitem{chandra2022towards}
R.~Chandra, ``Towards autonomous driving in dense, heterogeneous, and unstructured traffic,'' Ph.D. dissertation, University of Maryland, College Park, 2022.

\bibitem{guan2022m3detr}
T.~Guan, J.~Wang, S.~Lan, R.~Chandra, Z.~Wu, L.~Davis, and D.~Manocha, ``M3detr: Multi-representation, multi-scale, mutual-relation 3d object detection with transformers,'' in \emph{Proceedings of the IEEE/CVF winter conference on applications of computer vision}, 2022, pp. 772--782.

\bibitem{guerrero2018sensor}
J.~Guerrero-Ib{\'a}{\~n}ez, S.~Zeadally, and J.~Contreras-Castillo, ``Sensor technologies for intelligent transportation systems,'' \emph{Sensors}, vol.~18, no.~4, p. 1212, 2018.

\bibitem{chandra2023meteor}
R.~Chandra, X.~Wang, M.~Mahajan, R.~Kala, R.~Palugulla, C.~Naidu, A.~Jain, and D.~Manocha, ``Meteor: A dense, heterogeneous, and unstructured traffic dataset with rare behaviors,'' in \emph{2023 IEEE International Conference on Robotics and Automation (ICRA)}.\hskip 1em plus 0.5em minus 0.4em\relax IEEE, 2023, pp. 9169--9175.

\bibitem{qadri2020state}
S.~S. S.~M. Qadri, M.~A. G{\"o}k{\c{c}}e, and E.~{\"O}ner, ``State-of-art review of traffic signal control methods: challenges and opportunities,'' \emph{European transport research review}, vol.~12, pp. 1--23, 2020.

\bibitem{liu2023systematic}
W.~Liu, M.~Hua, Z.~Deng, Z.~Meng, Y.~Huang, C.~Hu, S.~Song, L.~Gao, C.~Liu, B.~Shuai \emph{et~al.}, ``A systematic survey of control techniques and applications in connected and automated vehicles,'' \emph{IEEE Internet of Things Journal}, 2023.

\bibitem{eom2020traffic}
M.~Eom and B.-I. Kim, ``The traffic signal control problem for intersections: a review,'' \emph{European transport research review}, vol.~12, pp. 1--20, 2020.

\bibitem{junhua2016modeling}
W.~Junhua, L.~Boya, Z.~Lanfang, and D.~R. Ragland, ``Modeling secondary accidents identified by traffic shock waves,'' \emph{Accident Analysis \& Prevention}, vol.~87, pp. 141--147, 2016.

\bibitem{grembek2018introducing}
O.~Grembek, A.~A. Kurzhanskiy, A.~Medury, P.~Varaiya, and M.~Yu, ``Introducing an intelligent intersection,'' \emph{Intelligent Transportation Systems Reports}, no.~13, 2018.

\bibitem{basic}
J.~Rios-Torres and A.~A. Malikopoulos, ``A survey on the coordination of connected and automated vehicles at intersections and merging at highway on-ramps,'' \emph{IEEE Transactions on Intelligent Transportation Systems}, vol.~18, no.~5, pp. 1066--1077, 2017.

\bibitem{Nilesh2021merge}
N.~Suriyarachchi, F.~M. Tariq, C.~Mavridis, and J.~S. Baras, ``Real-time priority-based cooperative highway merging for heterogeneous autonomous traffic,'' in \emph{2021 IEEE International Transportation Systems Conference (ITSC)}, 2021, pp. 2019--2026.

\bibitem{Nilesh2021shockwave}
N.~Suriyarachchi and J.~S. Baras, ``Shock wave mitigation in multi-lane highways using vehicle-to-vehicle communication,'' in \emph{2021 IEEE 94th Vehicular Technology Conference (VTC2021-Fall)}, 2021, pp. 1--7.

\bibitem{Pred_cruise}
B.~Asadi and A.~Vahidi, ``Predictive cruise control: Utilizing upcoming traffic signal information for improving fuel economy and reducing trip time,'' \emph{IEEE Transactions on Control Systems Technology}, vol.~19, no.~3, pp. 707--714, 2011.

\bibitem{iCACC}
I.~H. Zohdy, R.~K. Kamalanathsharma, and H.~Rakha, ``Intersection management for autonomous vehicles using icacc,'' in \emph{2012 15th International IEEE Conference on Intelligent Transportation Systems}, 2012, pp. 1109--1114.

\bibitem{platoon_rl}
C.~Chen, J.~Jiang, N.~Lv, and S.~Li, ``An intelligent path planning scheme of autonomous vehicles platoon using deep reinforcement learning on network edge,'' \emph{IEEE Access}, vol.~8, pp. 99\,059--99\,069, 2020.

\bibitem{chandra2022gameplan}
R.~Chandra and D.~Manocha, ``Gameplan: Game-theoretic multi-agent planning with human drivers at intersections, roundabouts, and merging,'' \emph{IEEE Robotics and Automation Letters}, 2022.

\bibitem{carlino2013auction}
D.~Carlino, S.~D. Boyles, and P.~Stone, ``Auction-based autonomous intersection management,'' in \emph{16th International IEEE Conference on Intelligent Transportation Systems (ITSC)}.\hskip 1em plus 0.5em minus 0.4em\relax IEEE, 2013, pp. 529--534.

\bibitem{sayin2018information}
M.~O. Sayin, C.-W. Lin, S.~Shiraishi, J.~Shen, and T.~Ba{\c{s}}ar, ``Information-driven autonomous intersection control via incentive compatible mechanisms,'' \emph{IEEE Transactions on Intelligent Transportation Systems}, vol.~20, no.~3, pp. 912--924, 2018.

\bibitem{rey2021online}
D.~Rey, M.~W. Levin, and V.~V. Dixit, ``Online incentive-compatible mechanisms for traffic intersection auctions,'' \emph{European Journal of Operational Research}, 2021.

\bibitem{gt6}
N.~Buckman, A.~Pierson, W.~Schwarting, S.~Karaman, and D.~L. Rus, ``Sharing is caring: Socially-compliant autonomous intersection negotiation,'' 2020.

\bibitem{schwarting2019social}
W.~Schwarting, A.~Pierson, J.~Alonso-Mora, S.~Karaman, and D.~Rus, ``Social behavior for autonomous vehicles,'' \emph{Proceedings of the National Academy of Sciences}, vol. 116, no.~50, pp. 24\,972--24\,978, 2019.

\bibitem{cmetric}
R.~Chandra, U.~Bhattacharya, T.~Mittal, A.~Bera, and D.~Manocha, ``Cmetric: A driving behavior measure using centrality functions,'' \emph{arXiv preprint arXiv:2003.04424}, 2020.

\bibitem{chandra2019graphrqi}
R.~Chandra, U.~Bhattacharya, T.~Mittal, X.~Li, A.~Bera, and D.~Manocha, ``Graphrqi: Classifying driver behaviors using graph spectrums,'' \emph{arXiv preprint arXiv:1910.00049}, 2019.

\bibitem{chandra2021using}
R.~Chandra, A.~Bera, and D.~Manocha, ``Using graph-theoretic machine learning to predict human driver behavior,'' \emph{IEEE Transactions on Intelligent Transportation Systems}, vol.~23, no.~3, pp. 2572--2585, 2021.

\bibitem{li2020game}
N.~Li, Y.~Yao, I.~Kolmanovsky, E.~Atkins, and A.~R. Girard, ``Game-theoretic modeling of multi-vehicle interactions at uncontrolled intersections,'' \emph{IEEE Trans. on Intelligent Transportation Systems}, 2020.

\bibitem{tian2020game}
R.~Tian, N.~Li, I.~Kolmanovsky, Y.~Yildiz, and A.~R. Girard, ``Game-theoretic modeling of traffic in unsignalized intersection network for autonomous vehicle control verification and validation,'' \emph{IEEE Transactions on Intelligent Transportation Systems}, 2020.

\bibitem{Malik2019unconstrained}
A.~A. Malikopoulos and L.~Zhao, ``Optimal path planning for connected and automated vehicles at urban intersections,'' in \emph{IEEE Conference on Decision and Control (CDC)}, 2019, pp. 1261--1266.

\bibitem{Bian2020opt}
Y.~Bian, S.~E. Li, W.~Ren, J.~Wang, K.~Li, and H.~X. Liu, ``Cooperation of multiple connected vehicles at unsignalized intersections: Distributed observation, optimization, and control,'' \emph{IEEE Transactions on Industrial Electronics}, vol.~67, no.~12, pp. 10\,744--10\,754, 2020.

\bibitem{Pei2021spaceopt}
H.~Pei, Y.~Zhang, Y.~Zhang, and S.~Feng, ``Optimal cooperative driving at signal-free intersections with polynomial-time complexity,'' \emph{IEEE Transactions on Intelligent Transportation Systems}, pp. 1--13, 2021.

\bibitem{Seyed2018opt}
S.~A. Fayazi and A.~Vahidi, ``Mixed-integer linear programming for optimal scheduling of autonomous vehicle intersection crossing,'' \emph{IEEE Transactions on IV}, vol.~3, no.~3, pp. 287--299, 2018.

\bibitem{capasso2021end}
A.~P. Capasso, P.~Maramotti, A.~Dell'Eva, and A.~Broggi, ``End-to-end intersection handling using multi-agent deep reinforcement learning,'' \emph{arXiv preprint arXiv:2104.13617}, 2021.

\bibitem{mavrogiannis2022b}
A.~Mavrogiannis, R.~Chandra, and D.~Manocha, ``B-gap: Behavior-rich simulation and navigation for autonomous driving,'' \emph{IEEE Robotics and Automation Letters}, vol.~7, no.~2, pp. 4718--4725, 2022.

\bibitem{mavrogiannis2020b}
------, ``B-gap: Behavior-guided action prediction and navigation for autonomous driving,'' \emph{arXiv preprint arXiv:2011.03748}, 2020.

\bibitem{sathyamoorthy2020densecavoid}
A.~J. Sathyamoorthy, J.~Liang, U.~Patel, T.~Guan, R.~Chandra, and D.~Manocha, ``Densecavoid: Real-time navigation in dense crowds using anticipatory behaviors,'' in \emph{2020 IEEE International Conference on Robotics and Automation (ICRA)}.\hskip 1em plus 0.5em minus 0.4em\relax IEEE, 2020, pp. 11\,345--11\,352.

\bibitem{roh2020multimodal}
J.~Roh, C.~Mavrogiannis, R.~Madan, D.~Fox, and S.~S. Srinivasa, ``Multimodal trajectory prediction via topological invariance for navigation at uncontrolled intersections,'' \emph{arXiv preprint arXiv:2011.03894}, 2020.

\bibitem{chandra2019traphic}
R.~Chandra, U.~Bhattacharya, A.~Bera, and D.~Manocha, ``Traphic: Trajectory prediction in dense and heterogeneous traffic using weighted interactions,'' in \emph{Proceedings of the IEEE Conference on Computer Vision and Pattern Recognition}, 2019, pp. 8483--8492.

\bibitem{chandra2019robusttp}
R.~Chandra, U.~Bhattacharya, C.~Roncal, A.~Bera, and D.~Manocha, ``Robusttp: End-to-end trajectory prediction for heterogeneous road-agents in dense traffic with noisy sensor inputs,'' in \emph{ACM Computer Science in Cars Symposium}, 2019, pp. 1--9.

\bibitem{chandra2020forecasting}
R.~Chandra, T.~Guan, S.~Panuganti, T.~Mittal, U.~Bhattacharya, A.~Bera, and D.~Manocha, ``Forecasting trajectory and behavior of road-agents using spectral clustering in graph-lstms,'' \emph{IEEE Robotics and Automation Letters}, vol.~5, no.~3, pp. 4882--4890, 2020.

\bibitem{gameopt}
N.~Suriyarachchi, R.~Chandra, J.~S. Baras, and D.~Manocha, ``Gameopt: Optimal real-time multi-agent planning and control for dynamic intersections,'' in \emph{2022 IEEE 25th International Conference on Intelligent Transportation Systems (ITSC)}.\hskip 1em plus 0.5em minus 0.4em\relax IEEE, 2022, pp. 2599--2606.

\bibitem{roughgarden2016twenty}
T.~Roughgarden, \emph{Twenty lectures on algorithmic game theory}.\hskip 1em plus 0.5em minus 0.4em\relax Cambridge University Press, 2016.

\bibitem{wang2020game}
M.~Wang, N.~Mehr, A.~Gaidon, and M.~Schwager, ``Game-theoretic planning for risk-aware interactive agents,'' \emph{IROS}, 2020.

\bibitem{clrs}
T.~H. Cormen, C.~E. Leiserson, R.~L. Rivest, and C.~Stein, \emph{Introduction to algorithms}, 2009.

\bibitem{multikeyword1}
X.~Du, M.~Su, X.~Zhang, and X.~Zheng, ``Bidding for multiple keywords in sponsored search advertising: Keyword categories and match types,'' \emph{Information Systems Research}, vol.~28, no.~4, pp. 711--722, 2017.

\bibitem{multikeyword2}
R.~Colini-Baldeschi, S.~Leonardi, M.~Henzinger, and M.~Starnberger, ``On multiple keyword sponsored search auctions with budgets,'' \emph{ACM Transactions on Economics and Computation (TEAC)}, vol.~4, no.~1, pp. 1--34, 2015.

\bibitem{gameoptPage}
\BIBentryALTinterwordspacing
GameOpt, ``Gameopt project supplementary material,'' 2022. [Online]. Available: \url{https://gamma.umd.edu/gameopt}
\BIBentrySTDinterwordspacing

\bibitem{gurobi}
\BIBentryALTinterwordspacing
L.~Gurobi~Optimization, ``Gurobi optimizer reference manual,'' 2021. [Online]. Available: \url{http://www.gurobi.com}
\BIBentrySTDinterwordspacing

\bibitem{SUMO2018}
P.~A. Lopez, M.~Behrisch, L.~Bieker-Walz, J.~Erdmann, Y.-P. Fl{\"o}tter{\"o}d, R.~Hilbrich, L.~L{\"u}cken, J.~Rummel, P.~Wagner, and E.~Wie{\ss}ner, ``Microscopic traffic simulation using sumo,'' in \emph{IEEE International Conference on Intelligent Transportation Systems}.\hskip 1em plus 0.5em minus 0.4em\relax IEEE, 2018.

\end{thebibliography}
\bibliographystyle{IEEEtran}

\end{document}